%% file: main.tex
\documentclass{article}
\usepackage{iclr2027_conference,times}
\usepackage[T1]{fontenc}
\usepackage[utf8]{inputenc}
\usepackage{amsmath,amssymb,amsthm}
\usepackage{booktabs,multirow}
\usepackage{graphicx}
\usepackage{xcolor}
\usepackage{enumitem}
\usepackage[ruled,vlined]{algorithm2e}
\usepackage{xspace}
\usepackage{url}
\usepackage[hidelinks]{hyperref}

\theoremstyle{plain}
\newtheorem{theorem}{Theorem}
\newtheorem{lemma}{Lemma}
\theoremstyle{definition}
\newtheorem{definition}{Definition}
\theoremstyle{remark}
\newtheorem{remark}{Remark}

\newcommand{\R}{\mathbb{R}}
\newcommand{\E}{\mathbb{E}}
\newcommand{\Pp}{\mathbb{P}}
\newcommand{\Var}{\mathrm{Var}}
\newcommand{\norm}[1]{\lVert #1 \rVert_2}
\newcommand{\Quant}[1]{\mathcal{Q}_{#1}}
\newcommand{\Blocks}{\mathcal{G}}
\newcommand{\Feas}{\mathcal{F}}
\newcommand{\muG}{\mu_{G}}
\newcommand{\sigG}{\sigma_{G}}

\newcommand{\Dcal}{\mathcal{D}_{\mathrm{cal}}}
\newcommand{\sloc}{s^{\mathrm{loc}}}
\newcommand{\normop}[1]{\lVert #1 \rVert_{\mathrm{op}}}

\newcommand{\cmt}[1]{}

\DeclareRobustCommand{\revdel}[1]{}
\definecolor{reviewadded}{RGB}{0,0,0}
\DeclareRobustCommand{\revadd}[1]{{\color{reviewadded}#1}}

\newenvironment{reviewaddition}{\begingroup\color{reviewadded}}{\endgroup}
\usepackage{etoolbox}
\AtBeginEnvironment{thebibliography}{%
  \let\revieworiginalbibitem\bibitem
  \renewcommand{\bibitem}[2][]{%
    \ifstrequal{#2}{arbelaez2011contour}{\color{reviewadded}}{\color{black}}%
    \revieworiginalbibitem[#1]{#2}}}
\title{Layerwise Error Attribution for Fast and Robust Mixed-Precision Post-Training Quantization}
\author{
	Samy Houache  \\
	Univ. Bordeaux, IMB \\
	Thales AVS, France \\
	\texttt{samy.houache@u-bordeaux.fr }
	\And
	Yann Traonmilin \\
	Univ. Bordeaux, Bordeaux INP, \\ CNRS, IMB, F-33400, Talence, France \\
	\texttt{yann.traonmilin@math.u-bordeaux.fr}
	\And
	Jean-François Aujol \\
	Univ. Bordeaux, Bordeaux INP, \\ CNRS, IMB, F-33400, Talence, France \\
	\texttt{jean-francois.aujol@math.u-bordeaux.fr}
}
	
\date{}

\iclrfinalcopy 

\begin{document}
\maketitle
\lhead{Preprint}

\begin{abstract}
Mixed-precision post-training quantization is a network compression method that assigns bits layer by layer, under a global memory budget using a small calibration set. The main difficulties are to overcome the combinatorial nature of the allocation problem and to manage the sensitivity to small, potentially corrupted databases. Hence, an efficient allocation method should be fast to compute and preserve model quality when calibration data are corrupted.  To design such a method, we derive a layerwise probabilistic analysis of the quantization error that separates propagated error from the local perturbation introduced at a given layer. We use this local term to build a separable score for a simple allocation algorithm, that requires no external solver. The probabilistic nature of our approach brings robustness to corrupted data. On denoising tasks with DRUNet, with an average budget of $4$ bits per weight, our method matches or improves state-of-the-art mixed-precision baselines under clean calibration, and is more robust to corrupted calibration, with  PSNR gains of up to $7.5$~dB under the tested corruptions.  Experiments show  bit-allocation speed-ups from $28\times$ to $2{,}570\times$ over the studied baselines. For quantized diffusion models, our experiments show that a direct application of our framework also improves the state-of-the-art.
\end{abstract}

\input{sections/01_intro_related}
\input{sections/02_setup_theory_method}

\input{sections/03_experiments_revised}

\section{Discussion and conclusion}
\label{sec:discussion}
Mixed-precision PTQ needs sensitive information at each layer in order to evaluate different bit-width configurations. Our layerwise analysis keeps track of the origin of the quantization error  by separating what is carried from previous layers from what is introduced locally. Our practical algorithm turns the local perturbation into a reusable table of  scores for each block and bit-width, and then meets the budget with a simple greedy algorithm and no external solver. This addresses the two practical requirements set out in the introduction: fast allocation and robustness to corrupted calibration data.
On DRUNet, our method combines competitive quality under clean calibration with gains of up to $7.5$~dB under the tested corruptions and bit-width allocation speed-ups from $28\times$ to $2{,}570\times$. On CelebA-HQ, our  mixed-precision model combines competitive FID with the highest mean PSNR to FP32 in the paired comparison. For future work, a major issue is to implement our quantized models on embedded platforms, facing the limitations of current software to manage mixed-precision models. From a theoretical perspective, the question of improving our layerwise probabilistic bounds remains an open question.

\subsection*{AI use statement}
We used generative AI tools to improve the English and to find related papers and public code for mixed-precision comparisons. All AI-assisted suggestions are subject to review by the authors, who take responsibility for the final version. The mathematical framework and the proofs were developed by the authors. AI assistance with code was limited to general-purpose utilities, such as retrieving layer names, counting parameters, etc. It was not used to implement the allocation algorithm or the core experimental pipeline.

\begin{reviewaddition}\subsection*{Ethics statement}
This work studies a general-purpose method for model compression. Its ethical implications depend on the underlying models, datasets, and applications. Appropriate safeguards are needed for each application.\end{reviewaddition}

\subsection*{Reproducibility statement}
Proofs are provided in Appendix~\ref{app:proofs}.
Algorithm~\ref{alg:complete_allocation} describes the allocation procedure. Appendix~\ref{app:experiments} reports the datasets, calibration and evaluation protocols, quantization settings, and runtime measurements.

\clearpage
\bibliographystyle{iclr2027_conference}
\bibliography{references}

\clearpage
\appendix
\input{appendix/proofs}
\clearpage
\makeatletter
\setlength{\@fptop}{0pt}
\makeatother
\input{appendix/experiments_appendix}
\clearpage
\input{appendix/additional_drunet}

\end{document}

%% file: sections/01_intro_related.tex
\section{Introduction}
\label{sec:intro}

Large neural networks can exceed the memory and energy budgets of embedded hardware \citep{choukroun2019low}. This is a major bottleneck for embedded vision systems, including autonomous vehicles and avionics, where models must run with limited onboard resources. On the one hand,  reducing model size and numerical precision lowers storage and memory traffic. On compatible hardware, it can also reduce computation and energy use \citep{wang2019haq}. But on the other hand, such a compression step deteriorates the quality of the networks. This is the reason why, for such applications,  there is a need to find an optimal trade-off between the accuracy of a network and its memory/energy budget.  The goal of this paper is to propose a complete post-training quantization (PTQ) framework that tackles this problem.

Post-training quantization  compresses a trained network by lowering the precision of its weights, using a small calibration set. Most PTQ pipelines use the same bit-width for every layer. This can be suboptimal because layers have different sensitivities to quantization leading to different effects on the network output. Mixed-precision PTQ addresses this by assigning a varying bit-width to each layer under a memory budget. In consequence an algorithm for mixed-precision PTQ  must decide which layers need more precision.
This flexibility can improve the quality--memory trade-off at a fixed budget \citep{dong2020hawqv2,ma2023ompq}. However, choosing these layerwise bit-width configurations jointly is difficult. For example, consider a network with 36 quantized layers and six candidate bit-widths per layer, from 3 to 8 bits. There are then $6^{36}\approx 10^{28}$ possible assignments before applying the memory constraint. Exhaustive evaluation is therefore impractical.

A practical alternative to exhaustive search must also meet two requirements. First,  the quantized model should retain its quality when calibration inputs are degraded. This matters when calibration samples are acquired from sensors affected by noise, missing measurements, or acquisition faults. Second, selection should be fast enough to repeat when the model, calibration set, candidate precisions, or memory budget changes. This cost matters during design exploration, where many settings may need to be tested.
To avoid evaluating every configuration,  mixed-precision methods estimate layer sensitivity on calibration data. Existing scores use Hessian information \citep{dong2019hawq,dong2020hawqv2,yao2021hawqv3}, block reconstruction and second-order loss information \citep{li2021brecq}, network orthogonality \citep{ma2023ompq}, pairwise cross-layer loss terms \citep{deng2023clado}, or a learned predictor \citep{mills2025qua2sedimo}. The bit-width assignment is then obtained by learned search \citep{wang2019haq}, Pareto search \citep{pandey2023practical}, linear programming \citep{ma2023ompq}, or integer programming \citep{yao2021hawqv3,deng2023clado}. The main drawback of these methods is that they can require repeated network evaluations, pairwise interactions, or an external solver, which make the allocation stage slow and computationally expensive. Their scores are also estimated from a small calibration set, whose quality and composition affect the final quantized model \citep{yuan2023benchmark,williams2024impact}.

 Theoretical analyses of quantization error can also guide the choice of sensitivity measure. \citet{gonon2024} derive a global  deterministic error bound for PTQ controlled by a worst-case norm quantity. \citet{houache2026on} provide a finer analysis by using layer dependent operator norms, yielding tighter bounds that better follow the behavior of PTQ observed in practice. This suggests that keeping the layer error information is useful for mixed-precision decisions. Note that while they offer important qualitative insight on how error is propagated, such deterministic results are pessimistic in nature, as worst-case scenarios are not likely to occur in practical settings.

\paragraph{Contributions.} 
We propose a probabilistic layerwise analysis of quantization error for a simple, yet fast and robust, mixed-precision post-training quantization algorithm. Our contributions are:
\begin{itemize}
\item  \textbf{A probabilistic layerwise analysis.}
We give probabilistic upper bounds of mixed-precision quantization error. We separate the error propagated from earlier layers from the perturbation introduced at the current layer (Section~\ref{sec:layerwise}). This keeps the source of each quantization error visible and generalize previous worst-case deterministic analysis. We also derive a global bound for modern architectures, including self-attention (Appendix~\ref{app:proof_dag}).   

\item 
\textbf{A separable, solver-free allocation algorithm.}
Using the theoretical insight from Section~\ref{sec:layerwise}, we propose a fast and robust mixed-precision PTQ algorithm. It computes separable local scores from one full-precision pass per calibration image, then assigns bit-widths with a simple greedy rule and no external solver.

\item \textbf{Fast and robust}  \textbf{bit-width allocation.}
On image denoising tasks with a DRUNet, our method combines competitive quality under clean calibration with stable PSNR under both tested calibration corruptions, at target budgets (Section~\ref{sec:robustness}).  Bit-width allocation is $28\times$ to $2{,}570\times$ faster than the studied baselines in matched timing comparisons (Section~\ref{sec:runtime}). The same score and greedy rule also transfer to latent diffusion, where our method gives competitive FID and the highest mean PSNR to FP32 in the paired comparison (Section~\ref{sec:ldm}).
\end{itemize}

\section{Related work}
\label{sec:related}

\paragraph{Error analysis for quantized networks.}
\citet{gonon2024} derive a deterministic output bound for the quantization error, controlled by a global worst-case operator norm for the network. \citet{houache2026on} instead retain layer-dependent operator norms, giving a finer account of how the error propagates through a quantized network. These deterministic bounds reveal useful structure but can be pessimistic on data. Probabilistic rounding-error analyses use moments instead of worst-case accumulation
\citep{higham2019probabilistic}. We follow this direction while keeping separate roles for our two results: the global bound evaluates a fixed bit-width configuration, and the layerwise bound exposes the local error term used for allocation. Quantization Error Propagation (QEP) corrects layer weights before quantization to reduce the effect of errors introduced by earlier layers \citep{arai2025qep}. Our method uses local error scores to choose the bit-width assigned to each block, with the quantizer kept fixed.

\paragraph{Sensitivity-driven mixed precision.}
Existing methods differ in two parts: how they measure sensitivity to quantization of layers and how they meet the budget. Sensitivity can come from Hessian statistics in HAWQ and HAWQ-V2 \citep{dong2019hawq,dong2020hawqv2}, block reconstruction in BRECQ \citep{li2021brecq}, network orthogonality in OMPQ
\citep{ma2023ompq}, quantizer-group perturbations in AIMET AMP \citep{pandey2023practical}, pairwise interactions in CLADO \citep{deng2023clado}, or topological entropy in GMPQ-TE \citep{li2025gmpqte}. The budget is then handled by a learned policy in HAQ \citep{wang2019haq}, Pareto search in AIMET AMP, or linear and integer optimization in HAWQ-V3, OMPQ, CLADO, and GMPQ-TE \citep{yao2021hawqv3,ma2023ompq,deng2023clado,li2025gmpqte}. Our method changes both parts: its score comes from the local term of our layerwise quantization error analysis, and all sensitivity scores are computed once before performing a solver-free greedy allocation. Earlier greedy PTQ work updates layer sensitivity after each decision \citep{satoki2021greedy}; our scores remain fixed and can be reused across budgets. Matched measurements in Section~\ref{sec:runtime} show bit-width allocation speed-ups from $28\times$ to $2{,}570\times$ over the studied baselines.

\paragraph{Calibration and quantization backends.}
PTQ relies on a small calibration set, whose size and composition can affect the quantized model \citep{yuan2023benchmark,williams2024impact}. QDrop addresses overfitting during reconstruction \citep{qdrop2022}. AdaRound learns weight-rounding decisions \citep{nagel2020adaround}, BRECQ reconstructs quantized blocks \citep{li2021brecq}, and PTQ4SR adapts scale
calibration to image restoration \citep{tu2023ptq4sr}. These methods change how the quantized weights or scales are obtained. Our DRUNet comparisons share a round-to-nearest (RTN) or TF-Enhanced backend, without AdaRound; AdaRound is used in the diffusion experiment. We then corrupt only the calibration inputs when testing allocation stability.

\paragraph{Quantization of diffusion models.}
Diffusion PTQ must account for the changing distributions across denoising steps. Q-Diffusion and PTQ4DM adapt calibration and reconstruction to this setting \citep{li2023qdiffusion,shang2023ptq4dm}. PTQD models
quantization noise during denoising \citep{he2023ptqd}, TFMQ-DM preserves temporal features during reconstruction \citep{huang2024tfmq}, and Qua$^2$SeDiMo learns sensitivity estimates for mixed-precision choices \citep{mills2025qua2sedimo}. Our contribution is different: we reuse the Q-Diffusion/AdaRound backend but apply to a pretrained latent diffusion model the same local score and greedy rule used for DRUNet \citep{rombach2022ldm}.

%% file: sections/02_setup_theory_method.tex
\section{Setup and notation}
\label{sec:setup}

\paragraph{Network and quantization.}
We first introduce the notation for a sequential network to keep the section simple. Appendix~\ref{app:proof_dag} gives the corresponding rules for the full architectures used in our experiments. The network has depth $L\ge1$ and linear layers without biases. Layer $\ell$ has weights $W_\ell\in\R^{N_\ell\times N_{\ell-1}}$. At bit-width $b_\ell\ge2$, its quantized weights are $W'_\ell:=\Quant{b_\ell}(W_\ell)$, where $\Quant{b}$ is the chosen quantization method, e.g. round-to-nearest. The input $x$ follows a distribution $p$ on $\R^{N_0}$. The full-precision and quantized forward passes are $S_\ell:=W_\ell S_{\ell-1}$ and $S'_\ell:=W'_\ell S'_{\ell-1}$, with $S_0=S'_0=x$. Their difference, i.e. the quantization error, at depth $\ell$ is $\Delta_\ell:=S_\ell-S'_\ell$. It satisfies the exact decomposition used in deterministic analyses \citep{gonon2024,houache2026on}:
\begin{equation} 
	\label{eq:exact_decomposition}
	\Delta_\ell \;=\; W_\ell \Delta_{\ell-1} \;+\; (W_\ell - W'_\ell)\, S'_{\ell-1}.
\end{equation}
The first term in Equation~\eqref{eq:exact_decomposition} propagates the error from previous layers. The second is the \emph{local quantization error} introduced at layer $\ell$. These two terms drive the analysis below.

\begin{definition}[Random amplification ratios]
	\label{def:amplification_ratios}
	For each layer $\ell \ge 1$, define the random variables
	\begin{equation}
		\label{eq:def_XY}
		X_\ell :=\frac{\norm{W_\ell \Delta_{\ell-1}}} {\norm{\Delta_{\ell-1}}}, \qquad Y_\ell := \frac{\norm{(W_\ell -W'_\ell) S'_{\ell-1}}} {\norm{S'_{\ell-1}}}.
	\end{equation}
	We set $X_\ell = 0$ when $\norm{\Delta_{\ell-1}} = 0$ and $Y_\ell = 0$ when $\norm{S'_{\ell-1}} = 0$.
\end{definition}

These ratios measure how much the propagated and local terms are amplified on the current input. They satisfy
\[
0 \le X_\ell \le \lVert W_\ell\rVert_{\mathrm{op}}, \qquad 0 \le Y_\ell \le \lVert W_\ell - W'_\ell\rVert_{\mathrm{op}}.
\]
The operator norms cover the worst case error while these ratios track the error actually reached by the data, making them suitable for a probabilistic approach. Our analysis below uses their moments.

\begin{definition}[$\delta$-probabilistic upper bound]
	\label{def:probabilistic_upper_bound}
	Let $Z$ be a nonnegative random variable and let $\delta \in [0,1)$. A deterministic constant $A$ is a \emph{$\delta$-probabilistic upper bound} for $Z$ if $\Pp(Z \le A) \ge 1 - \delta$.
\end{definition}

For each layer $\ell$, the recursion in Section~\ref{sec:layerwise} uses a $\tilde{\delta}_\ell$-probabilistic upper bound $B_\ell$ for $\norm{S'_\ell}$. At the input, we assume a bounded domain and choose $B_0$ such that $\Pp\bigl(\norm{x} \le B_0\bigr) = 1$, $\tilde{\delta}_0 = 0$.

\paragraph{Mixed-precision allocation problem}
Each layer $\ell\in\{1,\ldots,L\}$ uses a bit-width $b_\ell$ from a finite candidate set $\mathcal B_\ell$, and $n_\ell$ is its number of weights. A mixed-precision bit-width configuration $\boldsymbol{b} =(b_\ell)_{\ell=1}^L$ has cost $\mathrm{Cost}(\boldsymbol{b}) = \sum_{\ell=1}^L b_\ell n_\ell$, and the admissible set under a memory budget $\mathrm{Bud}$ is $\Feas:= \{ \boldsymbol{b} : \mathrm{Cost}(\boldsymbol{b}) \le \mathrm{Bud},\; b_\ell \in \mathcal B_\ell \}$. For comparison, we express the memory budget as $\mathrm{Bud}=B\sum_{\ell=1}^L n_\ell$, where $B$ is the target average number of bits per weight. For example, $B=4$ gives the same weight-memory budget as uniform 4-bit PTQ. Under configuration $\boldsymbol{b}$, the output error is $\norm{\Delta_L(x; \boldsymbol{b})}$. The objective of \emph{robust} mixed-precision PTQ  is to solve the following minimization problem:
\begin{equation} 
	\label{eq:ideal_objective}
	\min_{\boldsymbol{b} \in \Feas} \; Q_\tau(\boldsymbol{b}), \quad \text{where} \quad
Q_\tau(\boldsymbol{b}) := \inf\{ t : \Pp(\norm{\Delta_L(x,\boldsymbol{b})} \le t) \ge \tau \}.
\end{equation}
Here $\tau$ is close to one and $\delta=1-\tau$ is the corresponding tail level. Using a quantile instead of worst-case errors permits increased robustness to outliers in available datasets. In practice, we only have access to a calibration set $\Dcal=\{x_1,\dots,x_n\}$.
This ideal solution depends on the complete bit-width configuration. Testing one configuration requires a full-network evaluation, and the number of configurations grows exponentially with the number of layers. We begin by providing a probabilistic PTQ error analysis. Section~\ref{sec:global} first, bounds the output error of a fixed configuration. Section~\ref{sec:layerwise} then keeps track of where the error is introduced. Then, Section~\ref{sec:allocation} turns the local term of the layerwise analysis into a practical allocation score.

\section{Probabilistic error bounds for mixed-precision PTQ }
\label{sec:theory}

We give two bounds. We present first a simple global result  as a warm-up;   answering the natural question: how large is the final output error? It also shows why that view alone cannot help with layerwise bit-width allocation. The following layerwise result is the main link to the allocation algorithm. Our layerwise bound keeps the information needed for mixed precision: where each error is introduced and how it is carried forward.

\subsection{A global bound for a fixed configuration}
\label{sec:global}

For a fixed configuration, define the sequential model
\begin{equation}
  \label{eq:global_recursion}
  G_0:=0, \qquad G_\ell:=X_\ell G_{\ell-1} +\norm{(W_\ell-W'_\ell)S'_{\ell-1}}.
\end{equation}
Equation~\eqref{eq:exact_decomposition} and induction give
$\norm{\Delta_\ell}\le G_\ell$ for every input. Since $x\sim p$, $G_L$ varies with the input, and one concentration step at the output gives the following result.

\begin{lemma}[Global bound] 
	\label{lem:global}
	Assume $G_L$ has finite second moment under $p$, and let $\muG := \E[G_L]$, $\sigG^2 := \Var(G_L)$. For any $\delta \in (0,1)$,
	\begin{equation} 
		\label{eq:cantelli}
		\Pp\Bigl(\norm{\Delta_L} > \muG + \sigG \sqrt{\tfrac{1-\delta}{\delta}} \Bigr) \le \delta.
	\end{equation}
\end{lemma}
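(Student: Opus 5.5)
The plan has two ingredients: the pointwise domination $\norm{\Delta_L}\le G_L$ asserted just before the statement, and Cantelli's one-sided Chebyshev inequality applied to $G_L$.

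\textbf{Step 1: pointwise domination.} I will prove $\norm{\Delta_\ell}\le G_\ell$ for every input $x$ by induction on $\ell$.
\begin{itemize}
\item The base case holds because $\Delta_0=S_0-S'_0=0=G_0$.
\item For the inductive step, apply the triangle inequality to Equation~\eqref{eq:exact_decomposition}.
\item By Definition~\ref{def:amplification_ratios}, $\norm{W_\ell\Delta_{\ell-1}}=X_\ell\norm{\Delta_{\ell-1}}$. This identity also holds when $\norm{\Delta_{\ell-1}}=0$, since then both sides vanish under the convention $X_\ell=0$.
\item Since $X_\ell\ge 0$, the induction hypothesis gives $X_\ell\norm{\Delta_{\ell-1}}\le X_\ell G_{\ell-1}$. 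Adding the local term yields $\norm{\Delta_\ell}\le G_\ell$.
\end{itemize}
It follows that for every $t$ we have the event inclusion $\{\norm{\Delta_L}>t\}\subseteq\{G_L>t\}$, and hence $\Pp(\norm{\Delta_L}>t)\le\Pp(G_L>t)$.

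\textbf{Step 2: Cantelli's inequality.} Since $G_L$ has finite second moment, for any $\lambda>0$,
\[
\Pp(G_L-\muG\ge\lambda)\le\frac{\sigG^2}{\sigG^2+\lambda^2}.
\]
I will include its short proof for completeness. For any $u\ge0$, Markov's inequality applied to $(G_L-\muG+u)^2$ gives the bound $(\sigG^2+u^2)/(\lambda+u)^2$. Minimizing over $u$, the optimum is $u=\sigG^2/\lambda$, which gives the stated bound.

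\textbf{Step 3: choice of $\lambda$.} Set $\lambda=\sigG\sqrt{(1-\delta)/\delta}$, so that $\lambda^2=\sigG^2(1-\delta)/\delta$. Then
\[
\frac{\sigG^2}{\sigG^2+\lambda^2}=\frac{1}{1+(1-\delta)/\delta}=\delta.
\]
Combining this with Step 1 gives
\[
\Pp\bigl(\norm{\Delta_L}>\muG+\lambda\bigr)\le\Pp(G_L>\muG+\lambda)\le\Pp(G_L-\muG\ge\lambda)\le\delta,
\]
which is exactly \eqref{eq:cantelli}.

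\textbf{Degenerate case.} When $\sigG=0$, Step 3 gives $\lambda=0$ and Cantelli's inequality is not needed. In that case $G_L=\muG$ almost surely, so $\Pp(\norm{\Delta_L}>\muG)\le\Pp(G_L>\muG)=0\le\delta$.

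\textbf{Main obstacle.} There is no genuine difficulty. The only points needing care are the zero-denominator conventions in Step 1 and the $\sigG=0$ case.
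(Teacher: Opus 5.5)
Your proposal is correct and follows the paper's proof essentially step for step. It uses the same deterministic induction $\norm{\Delta_\ell}\le G_\ell$ (triangle inequality, the zero convention for $X_\ell$, and $X_\ell\ge 0$), then Cantelli's inequality with $\lambda=\sigG\sqrt{(1-\delta)/\delta}$ and a separate treatment of $\sigG=0$; the only addition is your self-contained Markov-based derivation of Cantelli's inequality, which the paper simply cites.
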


Once the pointwise recursion has been established, the probabilistic step is a direct application of the one-sided Chebyshev inequality \citep{cantelli1928} (Appendix~\ref{app:proof_global}). We write the recursion for a sequential model only to keep this warm-up simple. Appendix~\ref{app:proof_dag} proves propagation rules for modern computation graphs, so the same global bound applies to architectures built from these operations, including U-Nets and vision transformers.
For a chosen configuration $\boldsymbol{b} $, the moments of $G_L$ can be estimated on the calibration set. The theorem uses the true mean and variance. Estimating them from calibration data does not provide the same guarantee automatically. The global bound controls a complete configuration, but it does not say which layer should use a higher bit-width. The next Section gives such information through probabilistic layerwise bounds.

\subsection{A layerwise bound for mixed-precision quantization}
\label{sec:layerwise}
First remark that Equation~\eqref{eq:exact_decomposition} and sub-multiplicativity give the deterministic one-step bound
\begin{equation}
\label{eq:deterministic_one_step}
\norm{\Delta_\ell} \le \normop{W_\ell}\norm{\Delta_{\ell-1}} +\normop{W_\ell-W'_\ell}\norm{S'_{\ell-1}}.
\end{equation}
This is the layer-dependent structure used by \citet{houache2026on}. The random ratios $X_\ell$ and $Y_\ell$ introduced earlier are the realized counterparts of the two operator norms in Equation~\eqref{eq:deterministic_one_step}. We use their moments to derive a probabilistic layerwise bound that keeps the source of each error visible. To do this, we apply a concentration inequality at each layer and combine them through a union bound. Thanks to this approach, we preserve the two terms of Equation~\eqref{eq:exact_decomposition}. This is the information needed by a mixed-precision allocation algorithm.
The recursion uses two bounds at each layer. The bound $B_\ell$ is a $\tilde{\delta}_\ell$-probabilistic upper bound of output $S_\ell'$:
\begin{equation}
	\label{eq:activation_threshold}
	\Pp\bigl(\norm{S'_\ell}\le B_\ell\bigr) \ge 1-\tilde{\delta}_\ell.
\end{equation}
The error bound $A_\ell$ is constructed recursively as follows. At layer $\ell$, define the event
\begin{equation}
	\label{eq:def_event} E_\ell:= \{\norm{\Delta_\ell}\le A_\ell\} \cap \{\norm{S'_\ell}\le B_\ell\}.
\end{equation}
We start from $A_0:=0$ and $E_0:=\{\norm{x}\le B_0\}$.
Conditionally on $E_{\ell-1}$, define the moments
\begin{equation}
\label{eq:conditional_moments}
\begin{aligned}
\mu_{X,\ell}&:=\E[X_\ell\mid E_{\ell-1}], &
\mu_{Y,\ell}&:=\E[Y_\ell\mid E_{\ell-1}], &
\mathrm{Cov}_\ell&:=\mathrm{Cov}(X_\ell,Y_\ell\mid E_{\ell-1}),\\
\sigma_{X,\ell}^2&:=\Var(X_\ell\mid E_{\ell-1}), &
\sigma_{Y,\ell}^2&:=\Var(Y_\ell\mid E_{\ell-1}).
\end{aligned}
\end{equation}
We assume these moments are finite.
On $E_{\ell-1}$, the decomposition in
Equation~\eqref{eq:exact_decomposition} gives
\begin{equation}
	\label{eq:layerwise_one_step}
	\norm{\Delta_\ell} \le X_\ell A_{\ell-1} + Y_\ell B_{\ell-1}.
\end{equation}
The quantities $m_\ell$ and $v_\ell$ in the theorem below are the conditional mean and variance of the right side of Equation~\eqref{eq:layerwise_one_step}.

\begin{theorem}[Layerwise bound]
	\label{thm:layerwise}
	Fix a bit-width configuration, levels $\delta_1,\dots,\delta_L\in(0,1)$, and $\tilde{\delta}_1,\dots,\tilde{\delta}_{L-1}\in[0,1)$, with
	\begin{equation}
		\label{eq:layerwise_condition}
		\varepsilon:=\delta_L+\sum_{\ell=1}^{L-1}(\delta_\ell+\tilde{\delta}_\ell)<1
	\end{equation}
	and define recursively
	\begin{equation}
\label{eq:def_A}
\begin{aligned}
m_\ell&:=\mu_{X,\ell}A_{\ell-1}+\mu_{Y,\ell}B_{\ell-1},
\qquad A_\ell:=m_\ell+\sqrt{v_\ell\,\frac{1-\delta_\ell}{\delta_\ell}},\\
v_\ell&:=A_{\ell-1}^2\sigma_{X,\ell}^2+B_{\ell-1}^2\sigma_{Y,\ell}^2+2A_{\ell-1}B_{\ell-1}\mathrm{Cov}_\ell.
\end{aligned}
\end{equation}
	Then
	\begin{equation}
		\label{eq:layerwise_bound}
		\Pp\bigl(\norm{\Delta_L}>A_L\bigr) \le \varepsilon.
	\end{equation}
In other words, the quantity $A_L$ is an $\varepsilon$-probabilistic upper bound of $\norm{\Delta_L}$.
\end{theorem}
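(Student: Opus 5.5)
The plan is to bound the probability that the chain of good events $E_0,E_1,\dots,E_{L-1}$ breaks, and then control the final step at layer $L$ separately. The starting point is
\[
\{\norm{\Delta_L}>A_L\}\subseteq E_{L-1}^c\cup\bigl(E_{L-1}\cap\{\norm{\Delta_L}>A_L\}\bigr).
\]
Iterating the same decomposition gives
\[
E_{\ell}^c\subseteq E_{\ell-1}^c\cup\bigl(E_{\ell-1}\cap\{\norm{\Delta_\ell}>A_\ell\}\bigr)\cup\{\norm{S'_\ell}>B_\ell\}.
\]
The second of these pieces has probability at most $\tilde\delta_\ell$ by Equation~\eqref{eq:activation_threshold}. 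Because $\Pp(E_0^c)=0$ by the choice of $B_0$, a union bound leaves
\[
\Pp(\norm{\Delta_L}>A_L)\le \sum_{\ell=1}^{L}\Pp\bigl(E_{\ell-1}\cap\{\norm{\Delta_\ell}>A_\ell\}\bigr)+\sum_{\ell=1}^{L-1}\tilde\delta_\ell.
\]
It then remains to show that each term in the first sum is at most $\delta_\ell$; combined with condition~\eqref{eq:layerwise_condition}, this gives the claim $\Pp(\norm{\Delta_L}>A_L)\le\varepsilon$.

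For the per-layer step, work on $E_{\ell-1}$. By Definition~\ref{def:amplification_ratios}, $\norm{W_\ell\Delta_{\ell-1}}=X_\ell\norm{\Delta_{\ell-1}}$ and $\norm{(W_\ell-W'_\ell)S'_{\ell-1}}=Y_\ell\norm{S'_{\ell-1}}$. These identities also hold in the degenerate cases where $\Delta_{\ell-1}$ or $S'_{\ell-1}$ is zero, since both sides then vanish. The exact decomposition~\eqref{eq:exact_decomposition}, the triangle inequality and the bounds defining $E_{\ell-1}$ then give $\norm{\Delta_\ell}\le Z_\ell:=X_\ell A_{\ell-1}+Y_\ell B_{\ell-1}$, which is Equation~\eqref{eq:layerwise_one_step}. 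Since $A_{\ell-1}$ and $B_{\ell-1}$ are deterministic constants, the conditional mean of $Z_\ell$ given $E_{\ell-1}$ is $m_\ell$ and its conditional variance is $v_\ell$, by bilinearity of the conditional covariance.

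To bound the per-layer term, assume first that $\Pp(E_{\ell-1})>0$; otherwise the term is zero. Applying Cantelli's inequality under the conditional measure $\Pp(\cdot\mid E_{\ell-1})$, exactly as in the proof of Lemma~\ref{lem:global}, gives
\[
\Pp\bigl(Z_\ell>m_\ell+\sqrt{v_\ell(1-\delta_\ell)/\delta_\ell}\mid E_{\ell-1}\bigr)\le\delta_\ell.
\]
Hence
\[
\Pp\bigl(E_{\ell-1}\cap\{\norm{\Delta_\ell}>A_\ell\}\bigr)\le\Pp\bigl(Z_\ell>A_\ell\mid E_{\ell-1}\bigr)\,\Pp(E_{\ell-1})\le\delta_\ell.
\]
The case $v_\ell=0$ has to be handled separately. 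There $Z_\ell=m_\ell$ almost surely on $E_{\ell-1}$, so the probability is $0$ and the bound still holds.

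I expect the main obstacle to be bookkeeping rather than analysis. The moments are conditioned on $E_{\ell-1}$, but $B_\ell$ is an unconditional bound, so the events must be nested correctly so that each $\tilde\delta_\ell$ is charged exactly once and no $\tilde\delta_L$ appears; this explains why the sum in $\varepsilon$ stops at $L-1$. I would also check that conditioning does not double-count, which the inequality $\Pp(A\cap E)\le\Pp(A\mid E)$ settles.
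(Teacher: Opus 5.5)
Your proposal is correct and follows essentially the paper's proof: the same comparison variable $X_\ell A_{\ell-1}+Y_\ell B_{\ell-1}$ dominating $\norm{\Delta_\ell}$ on $E_{\ell-1}$, conditional Cantelli with the $v_\ell=0$ case handled separately, and the same union bound over per-layer failures and activation-threshold failures, which you unroll directly where the paper inducts on $\Pp(E_\ell^c)$. One point to tighten: do not say the per-layer term is zero when $\Pp(E_{\ell-1})=0$, since in that case the conditional moments, and hence $A_\ell$ and every later bound, would not be defined; instead note that your union bound, applied inductively up to layer $\ell-1$, gives $\Pp(E_{\ell-1})\ge 1-\sum_{k=1}^{\ell-1}(\delta_k+\tilde\delta_k)\ge 1-\varepsilon>0$, which is exactly where the hypothesis $\varepsilon<1$ is used.
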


The proof in Appendix~\ref{app:proof_layerwise} applies a conditional one-sided Chebyshev inequality (Cantelli's inequality) at each layer to $X_\ell A_{\ell-1}+Y_\ell B_{\ell-1}$. The bounds $B_\ell$ replace the norms of $S'_\ell$ by fixed values on the  considered events. The conditional mean keeps the propagated and local terms separate, while the variance also contains their covariance. This is the structure needed by the algorithm: each layer adds a local perturbation, and later layers carry earlier error forward. However, computing the exact propagated contribution for every joint bit-width configuration would remain expensive. Section~\ref{sec:score} therefore keeps the local quantity that identifies the source layer and evaluates it on full-precision activations. This result is also of mathematical interest because it generalizes the deterministic layerwise analysis of \citet{houache2026on} to a probabilistic setting.

\section{An algorithm for fast and robust mixed-precision PTQ}
\label{sec:allocation}
\label{sec:score}

To design our algorithm, we want to estimate the local influence of each layer and candidate bit-width on quantization error. Computing the exact influence of a layer requires a full pass in the quantized network for a given configuration $\boldsymbol{b}$. We therefore retain the local quantization error identified by Theorem~\ref{thm:layerwise} and evaluate it before choosing a complete configuration. By definition, the local perturbation  $Y_\ell$ introduced at layer $\ell$ satisfies
\begin{equation}
\label{eq:local_perturbation_identity}
Y_\ell\norm{S'_{\ell-1}} =\norm{(W_\ell-W'_\ell)S'_{\ell-1}}.
\end{equation}

\begin{algorithm}[t]
\caption{Local score construction and mixed-precision allocation}
\label{alg:complete_allocation}
\KwIn{Full-precision network; calibration set $\Dcal$; blocks and sizes $(g,n_g)$; candidate sets $(\mathcal B_g)_{g\in\Blocks}$; feasible budget $\mathrm{Bud}$; $\delta_{\mathrm{score}}$.}
\KwOut{Bit-width configuration $(b_g)_{g\in\Blocks}$.}
Initialize running statistics for every pair $(g,b)$\;
\ForEach{$x_i\in\Dcal$}{
  Run one full-precision forward pass and capture the required layer inputs\;
  \ForEach{$g\in\Blocks$ and $b\in\mathcal B_g$}{
    Compute $P_{g,b}(x_i)$ from Equation~\eqref{eq:block_pert} and update its running statistics\; }  Release the captured activations\;}
\lForEach{$g\in\Blocks$ and $b\in\mathcal B_g$}{Compute $\sloc_g(b)=\hat\mu_{g,b}+\hat\sigma_{g,b}/\sqrt{\delta_{\mathrm{score}}}$}
Set $b_g\leftarrow\max\mathcal B_g$ for every block\;
\While{$\mathrm{Cost}(\boldsymbol{b})>\mathrm{Bud}$}{
\ForEach{$g\in\Blocks$ with $b_g>\min\mathcal B_g$}{
  Set $b_g^-$ to the next lower candidate and compute $\Delta_g(b_g^{-})$ (Equation~\eqref{eq:greedy_ratio}) \;}
  Select a block $g^\star$ minimizing $\Delta_g(b_g^{-})$, breaking ties deterministically \;
 Update  $b_{g^\star}\leftarrow b_{g^\star}^-$ \;
}
\Return{$(b_g)_{g\in\Blocks}$}\;
\end{algorithm}

The moments of $Y_\ell$ are then used to build the layerwise probabilistic error bounds. The main idea of our algorithm is to consider this local perturbation to design a sensitivity score.
The analysis above is layerwise. For efficiency, our practical algorithm groups layers into allocation blocks $g\in\Blocks$, as in other grouped mixed-precision methods \citep{pandey2023practical,sun2022entropy}. This reduces the number of decisions and matches the block partitions used in our comparisons. All layers $\ell\in g$ in a given block share one bit-width $b_g\in\mathcal B_g$. Let $n_g:=\sum_{\ell\in g}n_\ell$ be the number of weights in block $g$; the practical cost is $\mathrm{Cost}(\boldsymbol{b})=\sum_{g\in\Blocks}b_g n_g$. For a block $g$ quantized at bit-width $b$, we define the approximated local quantization error
\begin{equation} 
	\label{eq:block_pert}
	P_{g,b}(x):= \sum_{\ell\in g} \bigl\lVert \bigl(W_\ell-\Quant{b}(W_\ell)\bigr)S_{\ell-1}(x) \bigr\rVert_2.
\end{equation}
The sum gives one value for each image in the calibration set and each block $g$.  In practice, we evaluate the local perturbation on the full-precision layer input $S_{\ell-1}(x)$ instead of $S_{\ell-1}'(x)$ as in \eqref{eq:local_perturbation_identity}. This is justified in our case because our algorithm  navigates configurations yielding low quantization errors, i.e $S_{\ell-1}(x) \approx S'_{\ell-1}(x) $ on the whole optimization path of Algorithm~\ref{alg:complete_allocation}. \revadd{The algorithm starts from the highest candidate bit-widths, where quantization errors are expected to be small.} Using the same full-precision inputs makes the values  $P_{g,b}$  independent of the bit-widths assigned to other blocks. 
We summarize the values of $P_{g,b}$ into one score on the calibration set by their mean and standard deviation:
\begin{equation} 
	\label{eq:local_score}
\sloc_g(b) := \hat\mu_{g,b} + \frac{\hat\sigma_{g,b}}{\sqrt{\delta_{\mathrm{score}}}},
\end{equation}
where $\hat\mu_{g,b}$ and $\hat\sigma_{g,b}$ are the sample mean and sample standard deviation of $P_{g,b}(x_i)$ on $\Dcal$. The mean captures the average perturbation, while the standard deviation penalizes variation across calibration images. The value of  $\delta_{\mathrm{score}}$ only sets the fixed weight of the dispersion term; it is not a confidence level for the selected allocation.
To calculate $\sloc_g(b)$, each image in the calibration dataset is passed once through the full-precision network in order  to update the scores for all candidate bit-widths. Computing $\sloc_g(b)$ for every block and candidate bit-width gives the scores used by our algorithm described below. Each score depends only on one pair $(g,b)$ and can be reused for any configuration  $\boldsymbol{b}$.

\paragraph{Bit-width allocation with a greedy algorithm}
\label{sec:greedy}

Our algorithm is initialized with every block at its largest candidate bit-width. The idea is to greedily decrease the bit-width block by block, based on the sensitivity score  $\sloc_g$. Given a current value $b_g$, let $b_g^{-} < b_g$ be the next lower candidate bit-width. At each step, the algorithm selects the block with the smallest score increase per weight bit saved,
\begin{equation}
\label{eq:greedy_ratio}
\Delta_g(b_g^{-}):=\frac{\bigl[\sloc_g(b_g^{-})-\sloc_g(b_g)\bigr]_+}
     {(b_g-b_g^{-})\,n_g},
\end{equation}
and sets $b_g\leftarrow b_g^{-}$. The denominator is the number of weight bits saved by this change, and $[.]_+=\max(.,0)$ sets a negative score difference to zero. The process stops when the budget is met. All decisions use the precomputed scores, so this stage requires no new network evaluation and no external solver. Algorithm~\ref{alg:complete_allocation} precisely describes our method.

%% file: sections/03_experiments_revised.tex
\section{Experiments}
\label{sec:experiments}

We evaluate two applications: image denoising with DRUNet and image generation with a latent diffusion U-Net. We first test whether our algorithm gives competitive allocations under clean calibration. We then keep the test set fixed and corrupt only the calibration inputs. At moderate budgets, our method matches the strongest references under clean calibration and performs better under  corrupted calibration dataset. Finally, we measure the time needed to compute the scores and assign bit-widths, and transfer the method to a latent diffusion U-Net. Some PTQ methods also optimize rounding decisions or quantization ranges. Our contribution is a local score and a simple greedy allocation rule; each DRUNet comparison uses a shared quantizer.

\subsection{Experimental setup}
\label{sec:common_setup}

\paragraph{DRUNet denoising.}
We use the pretrained color DRUNet from \citep{zhang2021plug} on $256\times256$ CelebA images \citep{liu2015celeba}. The input noise has standard deviation $\sigma_{\mathrm{in}}=0.1$. DRUNet has $32.64$ million weights in $64$ convolutions, grouped into $36$ allocation blocks. We use three calibration sets of $256$ images and the same $100$ clean test images. Quantization is weight-only; intermediate layer outputs remain in FP32. PSNR is averaged over images and calibration runs.

\paragraph{Matched comparisons.}
The reference methods use different quantizers and candidate bit-widths, so we match our algorithm to each reference separately when possible. Within each matched \revadd{baseline} comparison, both methods use the same quantization backend and allocation blocks. OMPQ-ORM scores blocks by network orthogonality and uses an algorithm for mixed-integer linear programming. CLADO models pairwise interactions between layer choices and uses  integer quadratic programming. AIMET measures quantizer-group sensitivity and searches a Pareto set. Table~\ref{tab:external_protocols} in the appendix summarizes these settings and an additional output signal to quantization noise ratio (SQNR) score control. We compare the bit-width configurations produced by the methods and the resulting model quality. These are allocation comparisons, not reproductions of each paper's complete PTQ pipeline.
For OMPQ-ORM, both algorithms use symmetric per-channel round-to-nearest quantization (RTN) with candidates from $3$ to $8$ bits. Each output channel has its own scale, and each weight is rounded to the nearest representable value. CLADO uses the same RTN backend with candidates $\{3,4,8\}$, while our main configuration uses $\{3,4,5,6,7,8\}$. For AIMET, both algorithms use TF-Enhanced with candidates $\{4,6,8\}$. This scheme selects quantization ranges to reduce quantization error. The adapted output-SQNR control quantizes one block at a time and measures the SQNR at the final DRUNet output, then uses our greedy rule. AIMET AMP is evaluated separately. Realized precision is the average bit-width weighted by the number of weights. It measures weight storage, not inference time.

\paragraph{Corrupted calibration datasets.}
Only the inputs used to build the allocation are corrupted. We use $\widetilde x=M\odot x+\epsilon$, where $M$ is a binary corruption pattern and $\epsilon$ is the same denoising noise as in the clean protocol. Each entry of $M$ is kept with probability $1-p$ and set to zero with probability $p$. We test one spatial pattern shared across RGB channels at $p=0.5$, and independent per-channel patterns at $p=0.9$. Calibration images and noise are paired across algorithms, and the test set is unchanged. These are distinct stress conditions.
We report $95\%$ confidence intervals for mean PSNR differences, comparing the methods on the same test images (Appendix~\ref{app:drunet_details}). Our score uses $\delta_{\mathrm{score}}=0.05$ in Equation~\eqref{eq:local_score}. Visual examples are in Appendix~\ref{app:qualitative}. In the top row of Figure~\ref{fig:clado_qualitative}, CLADO leaves more visible residual grain than our method under shared-channel calibration corruption.
\subsection{Experiments with clean calibration dataset}
\label{sec:clean_drunet}

Table~\ref{tab:allocation_benchmark} compares the different methods at a target average bit-width of $4$. Uniform quantization $B = 4$ loses $5.687$~dB relative to FP32, showing that the importance of mixed-precision PTQ. Our method improves OMPQ-ORM by $1.814$~dB and matches the performance of CLADO (only $0.008$~dB below). For our method minus CLADO, the $95\%$ confidence interval for the mean difference is $[-0.015,-0.001]$~dB.
\begin{table}[t]
\caption{Clean DRUNet results with the RTN backend at target budget $B=4$, equivalent to uniform 4-bit quantization. Data-dependent values are averaged over three calibration sets and the same $100$ test images.}
\label{tab:allocation_benchmark}
\centering
\setlength{\aboverulesep}{1pt}
\setlength{\belowrulesep}{1pt}
\normalsize
\begin{tabular}{lccc}
\toprule
Method & Candidate bit-widths & Mean bit-width & PSNR (dB) \\
\midrule
FP32 & 32 & 32.000 & 34.664 \\
Uniform, $B=3$ & 3 & 3.000 & 24.497 \\
Uniform, $B=4$ & 4 & 4.000 & 28.978 \\
OMPQ-ORM, $B=4$ & 3--8 & 3.988 & 32.587 \\
CLADO, $B=4$ & 3, 4, 8 & 3.988 & 34.409 \\
Our method, $B=4$ & 3--8 & 3.987 & 34.401 \\
\bottomrule
\end{tabular}
\end{table}
The complete experiments with different budgets give a complete picture (Appendix~\ref{app:clean_sweeps}). Our method outperforms OMPQ-ORM and the adapted output-SQNR score at every reported budget. CLADO is slightly better at the smallest target budget, $B=3.5$, while their absolute PSNR gap remains below $0.01$~dB from $B=4$ to $B=5$ in their main candidate spaces. Restricting our method to the same W3/W4/W8 candidates leads to the same conclusion (Appendix~\ref{app:candidate_matched}). \revadd{The speed of our method allows us to explore a larger allocation space, with one bit-width decision per layer instead of per block. At $B=3.5$, this matches CLADO on CelebA and slightly exceeds it on BSDS500 under clean calibration (Appendix~\ref{app:allocation_granularity}).} However, the next sections show that our algorithm shows  large improvements in both computational efficiency and robustness. 

Table~\ref{tab:aimet_clean} tests a separate TF-Enhanced backend. At $B=5$, our method matches AIMET  ($0.010$~dB higher) while using a slightly lower average bit-width; the $95\%$ confidence interval for the mean difference is $[0.007,0.013]$~dB. At $B=7$, the difference is negligible ($-0.0003$~dB, with interval $[-0.0011,0.0005]$~dB). At $B=6$, our PSNR is higher, but the realized precisions differ.
\begin{table}[t]
\caption{Clean DRUNet comparison with the common AIMET TF-Enhanced backend.}
\label{tab:aimet_clean}
\centering
\setlength{\aboverulesep}{1pt}
\setlength{\belowrulesep}{1pt}
\setlength{\tabcolsep}{3.5pt}
\normalsize
\begin{tabular}{cccccc}
\toprule
$B$ & Our mean bit-width & AIMET mean bit-width & Our PSNR & AIMET PSNR & Difference \\
\midrule
5 & 4.932 & 4.961 & 34.639 & 34.629 & $+0.010$ \\
6 & 5.984 & 5.716 & 34.653 & 34.636 & $+0.017$ \\
7 & 6.811 & 6.835 & 34.655 & 34.656 & $-0.0003$ \\
\bottomrule
\end{tabular}
\end{table}
Overall, the proposed score reaches the quality of the stronger references for quantization down to an average $4$ bits per weight, without using pairwise optimization or Pareto search. Bit-width allocation times are reported in Section~\ref{sec:runtime}.

\subsection{Stability under calibration corruption}
\label{sec:robustness}

Table~\ref{tab:robustness_matrix} compares the methods at one representative budget while keeping the clean test set unchanged. With the main W3--W8 candidates, our PSNR remains within $0.03$~dB of values obtained with a clean calibration dataset from $B=4$ to $B=5$. OMPQ-ORM is also stable but reaches lower absolute quality, and AIMET remains close to our method at $B=5$. In contrast, CLADO matches our clean result at $B=4$ but loses more than $5$~dB under either corruptions. These results are specific to the protocols in Table~\ref{tab:external_protocols}.
\begin{table}[t]
\caption{DRUNet results when only the calibration inputs used for allocation are corrupted. Each pair of rows uses the same quantization backend. PSNR is in dB. CLADO uses W3/W4/W8 and our main method uses W3--W8; realized average bit-widths are therefore shown explicitly.}
\label{tab:robustness_matrix}
\centering
\setlength{\aboverulesep}{1pt}
\setlength{\belowrulesep}{1pt}
\setlength{\tabcolsep}{2.5pt}
\normalsize
\begin{tabular}{llcccccc}
\toprule
\multirow{2}{*}{Comparison} & \multirow{2}{*}{Method}
& \multicolumn{2}{c}{clean}
& \multicolumn{2}{c}{$p=0.5$ shared}
& \multicolumn{2}{c}{$p=0.9$ per-channel} \\
\cmidrule(lr){3-4}\cmidrule(lr){5-6}\cmidrule(lr){7-8}
& & \shortstack[c]{Mean\\bit-width} & PSNR & \shortstack[c]{Mean\\bit-width} & PSNR & \shortstack[c]{Mean\\bit-width} & PSNR \\
\midrule
\multirow{2}{*}{AIMET ($B=5$)}
& Our method & 4.932 & 34.639 & 4.940 & 34.640 & 4.980 & 34.637 \\
& AIMET & 4.961 & 34.629 & 4.970 & 34.529 & 4.792 & 34.512 \\
\midrule
\multirow{2}{*}{OMPQ-ORM ($B=4$)}
& Our method & 3.987 & 34.401 & 3.991 & 34.429 & 3.925 & 34.429 \\
& OMPQ-ORM & 3.988 & 32.587 & 3.988 & 32.587 & 3.988 & 32.576 \\
\midrule
\multirow{2}{*}{CLADO ($B=4$)}
& Our method & 3.987 & 34.401 & 3.991 & 34.429 & 3.925 & 34.429 \\
& CLADO & 3.988 & 34.409 & 3.939 & 27.824 & 3.987 & 28.928 \\
\bottomrule
\end{tabular}
\end{table}
The same pattern holds across the complete CLADO experiments with different budgets, our largest observed gain is $7.509$~dB at $B=5$ under the shared corruption (Figure~\ref{fig:clado_corruption_curves}). 
The allocation diagnostics support this result. At $B=4$, the Spearman correlations between clean and corrupted
bit-width configurations are $0.961$ and $0.987$ for our method, compared with $0.060$ and $0.313$ for CLADO, under shared and per-channel corruption, respectively (Figure~\ref{fig:clado_stability}). Our method therefore better preserves the relative ordering of blocks by assigned bit-width. The W3/W4/W8 candidate-matched control leads to the same conclusion: our method remains more than $5$~dB ahead under both corruptions. This gain is therefore not only due to the additional intermediate bit-widths, but from our specific algorithm relying on local probabilistic sensitivity scores. Full curves and allocation diagnostics are in Appendix~\ref{app:full_curves} and random qualitative figures are in Appendix~\ref{app:qualitative}. \revadd{On BSDS500, our main W3--W8 method also gives comparable clean quality to CLADO and higher PSNR under both corruptions (Appendix~\ref{app:bsds_results}).}

\subsection{Measured cost}
\label{sec:runtime}

On DRUNet with $256$ clean calibration images, our method selects the bit-widths $28\times$ faster than AIMET AMP and $30\times$ faster than OMPQ-ORM with a fixed $\beta$. When OMPQ must also search for $\beta$, our method is $52\times$ faster. With the same W3/W4/W8 candidates, our method is $2{,}570\times$ faster than CLADO. \revadd{Times cover score computation and allocation, with medians over three runs.} CLADO uses two A100 GPUs, whereas our method uses one in this comparison (Appendix~\ref{app:runtime}). The same scores can be reused when only the memory budget changes. \revadd{The estimated layerwise selection time is $24.32$ seconds on one A100. For comparison, CLADO took about $900\times$ longer in the separate benchmark reported in Table~\ref{tab:recorded_runtime}.}

\subsection{Latent-diffusion transfer}
\label{sec:ldm}
We apply the same local score and greedy rule to DRUNet denoising and latent-diffusion generation, without adding a diffusion-specific allocation score or solver. Q-Diffusion and TFMQ-DM use a uniform 4-bit weight allocation in the PSNR comparison. Q-Diffusion uses the same Q-Diffusion/AdaRound backend \citep{li2023qdiffusion}. TFMQ-DM is a separate diffusion-specific pipeline with temporal feature-aware reconstruction \citep{huang2024tfmq}. Using $50{,}000$ generated images per method, our mixed-precision model obtains a FID of 20.853, compared with 22.626 for Q-Diffusion. \revadd{Our quantized model is competitive for generation.} \revadd{Across $50{,}000$ outputs paired by generation seed,} our method has the highest mean PSNR to FP32: 34.644~dB, compared with 24.545~dB for the uniform-W4 TFMQ-DM configuration. \revadd{Appendix~\ref{app:ldm_transfer} gives full results, PSNR distributions, and visual comparisons showing fewer artifacts with our quantized model.}

%% file: appendix/proofs.tex

\section{Proofs}
\label{app:proofs}

With, $x \sim p$, the ratios $X_\ell, Y_\ell$ are defined
in~\eqref{eq:def_XY} with their zero conventions, and all conditional moments of Theorem~\ref{thm:layerwise} assumed finite. Recall the bounded domain assumption of Section~\ref{sec:setup}: $B_0$ is deterministic with $\Pp(\norm{x} \le B_0) = 1$.

\subsection{Proof of Theorem~\ref{thm:layerwise} (layerwise bound)}
\label{app:proof_layerwise}

For $\ell \ge 1$, define the comparison variable
\begin{equation}
	\label{eq:def_tildeZ}
	\tilde Z_\ell := X_\ell A_{\ell-1} + Y_\ell B_{\ell-1}.
\end{equation}
This replaces the two random norms by the deterministic bounds $A_{\ell-1}$ and $B_{\ell-1}$.

\paragraph{Step 1: domination on the conditioning event.}
Taking norms in the exact decomposition~\eqref{eq:exact_decomposition} and using the triangle inequality gives
\begin{equation}
	\label{eq:Delta_le_Z}
	\begin{aligned}
		\norm{\Delta_\ell} &\le \norm{W_\ell\Delta_{\ell-1}} +
		\norm{(W_\ell-W'_\ell)S'_{\ell-1}} \\ &= X_\ell\norm{\Delta_{\ell-1}} + Y_\ell\norm{S'_{\ell-1}}.
	\end{aligned}
\end{equation}
The equality remains valid when either denominator in the definitions of $X_\ell$ and $Y_\ell$ is zero, by the corresponding zero convention. Therefore, on
\begin{equation}
	E_{\ell-1} = \{\norm{\Delta_{\ell-1}}\le A_{\ell-1}\} \cap \{\norm{S'_{\ell-1}}\le B_{\ell-1}\},
\end{equation}
we have

\begin{equation}
	\label{eq:Z_le_tildeZ}
	\norm{\Delta_\ell} \le X_\ell A_{\ell-1}+Y_\ell B_{\ell-1} = \tilde Z_\ell .
\end{equation}

\paragraph{Step 2: exact conditional moments and conditional
	one-sided Chebyshev.}
Suppose that $\Pp(E_{\ell-1})>0$. Since $A_{\ell-1}$ and
$B_{\ell-1}$ are deterministic constants, the first two conditional moments of $\tilde Z_\ell$ are
\begin{equation}
	\label{eq:exact_moments}
	\E[\tilde Z_\ell\mid E_{\ell-1}]=m_\ell, \qquad
	\Var(\tilde Z_\ell\mid E_{\ell-1})=v_\ell
\end{equation}

where $m_\ell$ and $v_\ell$ are defined in~\eqref{eq:def_A}.
If $v_\ell>0$, set
\begin{equation}
	t_\ell := \sqrt{v_\ell\frac{1-\delta_\ell}{\delta_\ell}} >0.
\end{equation}
Since $A_\ell=m_\ell+t_\ell$, the one-sided Chebyshev inequality
(Cantelli's inequality) gives
\begin{equation}
	\label{eq:cond_cheb}
	\begin{aligned}
		\Pp(\tilde Z_\ell>A_\ell\mid E_{\ell-1}) &\le \Pp(\tilde Z_\ell-m_\ell\ge t_\ell\mid E_{\ell-1}) \\
		&\le \frac{v_\ell}{v_\ell+t_\ell^2} = \delta_\ell
	\end{aligned}
\end{equation}

If $v_\ell=0$, then $\tilde Z_\ell=m_\ell=A_\ell$ almost surely
conditionally on $E_{\ell-1}$. The probability of the strict event in
\eqref{eq:cond_cheb} is therefore zero. Thus~\eqref{eq:cond_cheb} holds in both cases.

\paragraph{Step 3: the first layer.}

Let
\begin{equation}
	F_1:=\{\norm{\Delta_1}>A_1\}.
\end{equation}
The event $E_0$ holds almost surely. Hence the conditional probability in~\eqref{eq:cond_cheb} is well defined at $\ell=1$. Moreover, \eqref{eq:Z_le_tildeZ} gives
\begin{equation}
	F_1 \subseteq \bigl(\{\tilde Z_1>A_1\}\cap E_0\bigr) \cup E_0^c.
\end{equation}

It follows that
\begin{equation}
	\begin{aligned}
		\Pp(F_1) &\le \Pp(\tilde Z_1>A_1\mid E_0)\Pp(E_0) +\Pp(E_0^c) \\ &\le \delta_1.
	\end{aligned}
\end{equation}

If $L=1$, then $\varepsilon=\delta_1$, and this already proves the
theorem. We now assume $L\ge2$.

\paragraph{Step 4: induction up to layer $L-1$.}
For $1\le\ell\le L-1$, we prove

\begin{equation}
	\label{eq:IH}
	\Pp(E_\ell^c) \le \sum_{k=1}^{\ell}(\delta_k+\tilde{\delta}_k).
\end{equation}

\emph{Base case $\ell=1$.}
Since
\begin{equation}
	E_1^c = F_1\cup\{\norm{S'_1}>B_1\},
\end{equation}

Equation~\eqref{eq:activation_threshold} and the preceding estimate give
\begin{equation}
	\Pp(E_1^c) \le \delta_1+\tilde{\delta}_1.
\end{equation}
This proves~\eqref{eq:IH} at $\ell=1$.

\emph{Inductive step.}
Let $2\le\ell\le L-1$ and assume~\eqref{eq:IH} at $\ell-1$. Then
\begin{equation}
	\Pp(E_{\ell-1}) \ge 1-\sum_{k=1}^{\ell-1}(\delta_k+\tilde{\delta}_k) \ge 1-\varepsilon > 0
\end{equation}
Thus the conditional bound~\eqref{eq:cond_cheb} is well defined at
layer $\ell$.

Let
\begin{equation}
	F_\ell:=\{\norm{\Delta_\ell}>A_\ell\}.
\end{equation}

By~\eqref{eq:Z_le_tildeZ},
\begin{equation}
	F_\ell \subseteq \bigl(\{\tilde Z_\ell>A_\ell\}\cap E_{\ell-1}\bigr) \cup E_{\ell-1}^c.
\end{equation}

Therefore, using~\eqref{eq:cond_cheb} and the induction hypothesis,
\begin{equation}
	\label{eq:step_bound}
	\begin{aligned}
		\Pp(F_\ell) &\le \Pp(\tilde Z_\ell>A_\ell\mid E_{\ell-1}) \Pp(E_{\ell-1}) + \Pp(E_{\ell-1}^c) \\
		&\le \delta_\ell + \sum_{k=1}^{\ell-1}(\delta_k+\tilde{\delta}_k).
	\end{aligned}
\end{equation}
Finally,
\begin{equation}
	E_\ell^c = F_\ell\cup\{\norm{S'_\ell}>B_\ell\}.
\end{equation}

Using Equation~\eqref{eq:activation_threshold} once more,
\begin{equation}
	\Pp(E_\ell^c) \le \sum_{k=1}^{\ell}(\delta_k+\tilde{\delta}_k).
\end{equation}
This proves the induction claim.

\paragraph{Conclusion.}
By~\eqref{eq:IH},
\begin{equation}
	\Pp(E_{L-1}) \ge 1-\sum_{k=1}^{L-1}(\delta_k+\tilde{\delta}_k) = 1-\varepsilon+\delta_L > 0
\end{equation}

Hence~\eqref{eq:cond_cheb} applies at the final layer. Let
\begin{equation}
	F_L:=\{\norm{\Delta_L}>A_L\}.
\end{equation}
The same event decomposition gives
\begin{equation}
	\begin{aligned}
		\Pp(F_L) &\le \Pp(\tilde Z_L>A_L\mid E_{L-1})\Pp(E_{L-1}) +\Pp(E_{L-1}^c) \\
		&\le \delta_L +\sum_{k=1}^{L-1}(\delta_k+\tilde{\delta}_k) \\
		&= \varepsilon
	\end{aligned}
\end{equation}
\hfill$\square$

\subsection{Proof of Lemma~\ref{lem:global} (global bound)}
\label{app:proof_global}

The recursion~\eqref{eq:global_recursion} is the sequential construction used in this proof.


\paragraph{Step 1: almost-sure domination.}
We show $\norm{\Delta_\ell} \le G_\ell$ for every $\ell$ by induction. At $\ell = 0$, $\norm{\Delta_0} = 0 = G_0$. For the step, by~\eqref{eq:Delta_le_Z}, the nonnegativity of $X_\ell$, and the induction hypothesis,
\begin{equation}
  \norm{\Delta_\ell}
  \;\le\; X_\ell \norm{\Delta_{\ell-1}} + Y_\ell \norm{S'_{\ell-1}}
  \;\le\; X_\ell\, G_{\ell-1} + Y_\ell \norm{S'_{\ell-1}}
  \;=\; G_\ell .
\end{equation}
This step is deterministic.

\paragraph{Step 2: one-sided Chebyshev.}
$G_L$ has finite mean $\muG$ and variance $\sigG^2$ by assumption,set $t^\star := \sigG \sqrt{(1-\delta)/\delta}$. If $\sigG > 0$, then $t^\star > 0$ and the one-sided Chebyshev inequality \citep{cantelli1928} gives
\begin{equation}
  \Pp(G_L > \muG + t^\star) \;\le\; \Pp(G_L - \muG \ge t^\star)
  \;\le\; \frac{\sigG^2}{\sigG^2 + t^{\star 2}} \;=\; \delta
\end{equation}
If $\sigG = 0$, then $G_L = \muG$ almost surely and $t^\star = 0$, so $\Pp(G_L > \muG + t^\star) = \Pp(G_L > \muG) = 0 \le \delta$; the non-strict event would fail in this degenerate case, which is why Lemma~\ref{lem:global} is stated with a strict inequality. In both cases, by Step~1, $\{\norm{\Delta_L} > \muG + t^\star\} \subseteq  \{G_L > \muG + t^\star\}$, and~\eqref{eq:cantelli} follows. \hfill$\square$

\subsection{Proof of the propagation rules of
Table~\ref{tab:dag_rules}}
\label{app:proof_dag}

\revadd{For the global bound, we represent a neural network, including self-attention operations, by a directed acyclic graph with node set $\mathcal V$ and edge set $\mathcal E$.} A node stores the output of an operation, and an edge $e:u\to v$ means that the operation at $v$ uses the output of $u$. The input and output nodes are $v_{\mathrm{in}}$ and $v_{\mathrm{out}}$. Nodes are evaluated after their parents.

At each node, the full-precision and quantized passes produce $S_v$ and $S'_v$, with $\Delta_v:=S_v-S'_v$. \revadd{We take Euclidean norms of the outputs viewed as vectors.} For a single-parent operation, $e:u\to v$ \revadd{indexes the parameters of the operation at $v$, such as $W_e$ for a linear layer}. Addition and concatenation have two parents, denoted by $u_1$ and $u_2$.

For a quantized linear or convolutional edge, define
\[
X_e:=\frac{\norm{W_e\Delta_u}}{\norm{\Delta_u}},\qquad
Y_e:=\frac{\norm{(W_e-W'_e)S'_u}}{\norm{S'_u}},
\]
with the same zero conventions as in Definition~\ref{def:amplification_ratios}.

Self-attention combines token features using weights computed from the tokens themselves~\citep{vaswani2017attention}. For example, one dot-product attention head computes
\begin{equation}
\begin{aligned}
Q&=SW_Q,\qquad K=SW_K,\qquad V=SW_V,\\
\operatorname{Att}(S)&=\operatorname{softmax}\!\left(\frac{QK^\top}{\sqrt{d_k}}\right)V.
\end{aligned}
\end{equation}
Each row of $S$ is a token, $d_k$ is the key dimension, and softmax is applied row by row. Multiple heads are computed in parallel, then concatenated and projected to the output space.

We write $F_W$ for the complete attention block, with $W$ collecting its quantizable parameters. The map $F_{W'}$ is the same block with quantized parameters. All other operations are unchanged. Define

\begin{equation}
\label{eq:att_ratios}
\begin{aligned}
X^{\mathrm{att}}
&:=\frac{\norm{F_W(S)-F_W(S')}}{\norm{S-S'}},
& L^{\mathrm{att}}
&:=\norm{F_W(S')-F_{W'}(S')},\\
G_{\mathrm{out}}
&:=X^{\mathrm{att}}G_{\mathrm{in}}+L^{\mathrm{att}},
\end{aligned}
\end{equation}
with $X^{\mathrm{att}}=0$ when $S=S'$.
For self-attention, use the edge-indexed form of
Equation~\eqref{eq:att_ratios}:
\[
X_e^{\mathrm{att}}
:=\frac{\norm{F_{W_e}(S_u)-F_{W_e}(S'_u)}}{\norm{\Delta_u}},\qquad L_e^{\mathrm{att}}
:=\norm{F_{W_e}(S'_u)-F_{W'_e}(S'_u)},
\]
with $X_e^{\mathrm{att}}=0$ when $\norm{\Delta_u}=0$. The realized normalization ratios below are also set to zero when $\norm{\Delta_u}=0$.

\begin{table}[t]
\caption{Propagation rules for the global bound $G$ on a computation DAG. If the bounds hold at the parent nodes, each update ensures
$\norm{\Delta_v}\le G_v$.}
\label{tab:dag_rules}
\centering
\setlength{\tabcolsep}{2pt}
\normalsize
\begin{tabular}{@{}p{0.26\linewidth}p{0.49\linewidth}p{\dimexpr0.25\linewidth-8pt\relax}@{}}
\toprule
Operation & Update of the bound $G$ & Justification \\
\midrule
Quantizable linear $S_v = W_e S_u$ &
  $G_v = X_e G_u + Y_e \norm{S'_u}$ &
  decomposition~\eqref{eq:exact_decomposition} \\
Frozen affine norm.\ $S_v = \Gamma_e S_u + \beta_e$ &
  $G_v = X^{\mathrm{BN}}_e G_u$, \ $X^{\mathrm{BN}}_e := \frac{\norm{\Gamma_e \Delta_u}}{\norm{\Delta_u}}$ &
  affine, frozen statistics \\
Data-dependent norm.\ $S_v = N(S_u)$ &
  $G_v = X^{\mathrm{N}}_e G_u$, \ $X^{\mathrm{N}}_e := \frac{\norm{N(S_u) - N(S'_u)}}{\norm{\Delta_u}}$ &
  realized ratio (GroupNorm, LayerNorm) \\
$L_\sigma$-Lipschitz pointwise $\sigma(\cdot)$ &
  $G_v = L_\sigma G_u$ &
  Lipschitz property \\
Residual addition $S_v = S_{u_1} + S_{u_2}$ &
  $G_v = G_{u_1} + G_{u_2}$ &
  triangle inequality \\
Channel concatenation &
  $G_v = \bigl(G_{u_1}^2 + G_{u_2}^2\bigr)^{1/2}$ &
  Pythagorean identity \\
NN-upsampling, factor $s$, $d$ spatial dims &
  $G_v = s^{d/2}\, G_u$ &
  replication identity
  \\
Additive embedding $S_v = S_u + W_e e$ &
  $G_v = G_u + \norm{(W_e - W'_e)\, e}$ &
  $e$ shared across the two passes (per sample) \\
Self-attention $S_v = F_{W_e}(S_u)$ &
  $G_v = X^{\mathrm{att}}_e G_u + L^{\mathrm{att}}_e$ &
  realized ratio and local term, Eq.~\eqref{eq:att_ratios} \\
\bottomrule
\end{tabular}
\end{table}


We prove that, on a computation directed acyclic graph (DAG) whose nodes are the operations of Table~\ref{tab:dag_rules}, the bound $G$ defined by the table rules satisfies $\norm{\Delta_v} \le G_v$ almost surely at every node $v$. The proof is a deterministic induction in topological order; the base case is the input node, where $\Delta_{v_{\mathrm{in}}} = 0 = G_{v_{\mathrm{in}}}$. For the inductive step, assume $\norm{\Delta_u} \le G_u$ at every parent $u$ of $v$; we treat each rule. Throughout, $S_v$ and $S'_v$ denote the exact and quantized outputs at $v$, and $\Delta_v = S_v - S'_v$.

\emph{(1) Quantizable linear.}
Identical to~\eqref{eq:Delta_le_Z}--\eqref{eq:Z_le_tildeZ} with $G_u$ in place of $A_{u}$:
$\norm{\Delta_v} \le X_e \norm{\Delta_u} + Y_e \norm{S'_u} \le X_e G_u + Y_e \norm{S'_u} = G_v$, using $X_e \ge 0$ and the induction hypothesis.

\emph{(2) Frozen affine normalization.}
The notation $\mathrm{BN}$ refers to batch normalization with fixed statistics; the same rule applies to any fixed affine normalization.
With frozen affine parameters $(\Gamma_e, \beta_e)$, the shift cancels in the difference: $\Delta_v = \Gamma_e \Delta_u$, so $\norm{\Delta_v} = X^{\mathrm{BN}}_e \norm{\Delta_u} \le X^{\mathrm{BN}}_e G_u$ by the definition of $X^{\mathrm{BN}}_e$ and its zero convention. 

\emph{(2b) Data-dependent normalization (GroupNorm, LayerNorm).}
Here $S_v = N(S_u)$ with statistics computed from the current activation, so $\Delta_v = N(S_u) - N(S'_u)$ is \emph{not} an affine image of $\Delta_u$. By the definition of the realized ratio $X^{\mathrm{N}}_e = \norm{N(S_u) - N(S'_u)}/\norm{\Delta_u}$,
$\norm{\Delta_v} = X^{\mathrm{N}}_e \norm{\Delta_u} \le
X^{\mathrm{N}}_e G_u$. The zero convention is safe on both sides: $\Delta_u = 0$ implies $S_u = S'_u$, hence $\Delta_v = 0$, since the normalization carries no quantized parameters in our setting.

\emph{(3) $L_\sigma$-Lipschitz pointwise nonlinearity.}
$\norm{\sigma(S_u) - \sigma(S'_u)} \le L_\sigma \norm{\Delta_u} \le L_\sigma G_u$.

\emph{(4) Residual addition.}
Here, $u_1$ and $u_2$ are the two parent nodes of $v$, whose outputs are added. $\Delta_v = \Delta_{u_1} + \Delta_{u_2}$, so by the triangle inequality and the induction hypothesis at both parents, $\norm{\Delta_v} \le \norm{\Delta_{u_1}} + \norm{\Delta_{u_2}} \le G_{u_1} + G_{u_2}$.

\emph{(5) Channel concatenation.} 
Here, $u_1$ and $u_2$ are the two parent nodes of $v$, whose outputs are concatenated along the channel dimension. $\norm{\Delta_v}^2 = \norm{\Delta_{u_1}}^2 + \norm{\Delta_{u_2}}^2 \le G_{u_1}^2 + G_{u_2}^2$, using $0 \le \norm{\Delta_{u_i}} \le G_{u_i}$; taking square roots gives the rule.

\emph{(6) Nearest-neighbor upsampling.}
Upsampling by factor $s$ in $d$ spatial dimensions replicates each entry of its input exactly $s^d$ times, and acts entrywise, so it commutes with the difference:
$\Delta_v = \mathrm{Up}_s(\Delta_u)$ and
\begin{equation}
  \norm{\mathrm{Up}_s(\Delta_u)}^2
  \;=\; s^d\, \norm{\Delta_u}^2,
\end{equation}
so $\norm{\Delta_v} = s^{d/2} \norm{\Delta_u} \le s^{d/2} G_u$. In two spatial dimensions the factor is $s$, i.e.\ $2$ for the standard $2 \times 2$ upsampling.

\emph{(7) Additive embedding.}  

An additive embedding supplies extra information to the network. A vector $e$, for example an encoding of the diffusion timestep, is projected by $W_e$ and added to the current output:
\begin{equation}
S_v=S_u+W_e e,\qquad S'_v=S'_u+W'_e e.
\end{equation}

The embedding input $e$ is identical on both forward passes for each sample; for timestep embeddings, $e$ varies across samples but is shared between the exact and quantized passes, which is all the rule needs. Hence $\Delta_v = \Delta_u + (W_e - W'_e)\, e$ and $\norm{\Delta_v} \le \norm{\Delta_u} + \norm{(W_e - W'_e)\, e} \le G_u + \norm{(W_e - W'_e)\, e}$; the local term is used in absolute form, so no normalization convention is needed.

\emph{(8) Self-attention (or any deterministic map with quantizable parameters, evaluated on the same inputs in both passes).}
Write $F_{W_e}$ for the block with exact parameters and $F_{W'_e}$ with quantized parameters. The error decomposes exactly as
\begin{equation} \label{eq:att_decomposition}
  \Delta_v
  \;=\; \underbrace{F_{W_e}(S_u) - F_{W_e}(S'_u)}_{\text{propagated}}
  \;+\; \underbrace{F_{W_e}(S'_u) - F_{W'_e}(S'_u)}_{\text{local}} .
\end{equation}
By the triangle inequality and the definitions~\eqref{eq:att_ratios}
(the zero convention on $X^{\mathrm{att}}_e$ is safe, since $\Delta_u = 0$ makes the propagated term vanish identically; the local term $L^{\mathrm{att}}_e$ is absolute and needs no convention),
\begin{equation}
  \norm{\Delta_v}
  \;\le\; X^{\mathrm{att}}_e \norm{\Delta_u}  + L^{\mathrm{att}}_e
  \;\le\; X^{\mathrm{att}}_e\, G_u + L^{\mathrm{att}}_e
  \;=\; G_v .
\end{equation}
No Lipschitz property of $F_{W_e}$ is used: $X^{\mathrm{att}}_e$ is the realized ratio on each input, so the domination follows directly from the exact decomposition in~\eqref{eq:att_decomposition}. Dot-product attention has no finite global Lipschitz constant on its unbounded input space \citep{kim2021lipschitz}, so $X^{\mathrm{att}}_e$ need not admit a useful almost-sure bound; the only additional assumption is the finite second moment of $G_{v_{\mathrm{out}}}$ required by
Lemma~\ref{lem:global}. When the block is kept in full precision ($W'_e = W_e$), the local term of~\eqref{eq:att_decomposition} vanishes and the rule reduces to $G_v = X^{\mathrm{att}}_e G_u$. \hfill$\square$

\subsection{A layerwise bound on DAGs}
\label{app:layerwise_dag}

For completeness, we extend Theorem~\ref{thm:layerwise} to the DAG defined above. At each node $v$, the output bound $B_v$ is a deterministic constant satisfying
\begin{equation}
\Pp(\norm{S'_v}\le B_v)\ge 1-\tilde{\delta}_v.
\end{equation}
These bounds are assumed to be available. The recursion below constructs the error bounds $A_v$.

Start with $A_{v_{\mathrm{in}}}=0$ and $B_{v_{\mathrm{in}}}=B_0$, and process nodes after their parents. For a parent $u$, define
\begin{equation}
E_u:=\{\norm{\Delta_u}\le A_u\}\cap\{\norm{S'_u}\le B_u\}.
\end{equation}
Let $\mathcal V_{\mathrm{prob}}$ contain the nodes where we apply conditional Cantelli: quantizable linear layers and frozen affine or data-dependent normalization layers.  For such a node $v$ with parent $u$ and edge $e:u\to v$, define
\begin{equation}
\tilde Z_v=
\begin{cases}
X_e A_u+Y_e B_u, & \text{quantizable linear layer},\\
X_e^{\mathrm{BN}}A_u, & \text{frozen affine normalization},\\
X_e^{\mathrm{N}}A_u, & \text{data-dependent normalization}.
\end{cases}
\end{equation}
We then set
\begin{equation}
A_v=\E[\tilde Z_v\mid E_u]
+\sqrt{\frac{1-\delta_v}{\delta_v}\Var(\tilde Z_v\mid E_u)}.
\end{equation}
This is the construction in Equation~\eqref{eq:def_A}, applied to a node rather than a layer. The proof below establishes that each conditioning event has positive probability.

The other nodes use deterministic updates:
\begin{equation}
A_v=
\begin{cases}
L_\sigma A_u, & \text{pointwise nonlinearity},\\
A_{u_1}+A_{u_2}, & \text{addition},\\
(A_{u_1}^2+A_{u_2}^2)^{1/2}, & \text{concatenation},\\
s^{d/2}A_u, & \text{nearest-neighbor upsampling},\\
A_u+L_e, & \text{additive embedding},
\end{cases}
\qquad L_e:=\norm{(W_e-W'_e)e}.
\end{equation}
For the additive-embedding update, we assume that $e$ is fixed across samples, so $L_e$ is a deterministic constant. The global rule only requires the two passes to share $e$ for each sample; it allows $e$ to vary between samples.

\begin{lemma}[Layerwise bound on DAGs] \label{lem:layerwise_dag}
Assume 
\begin{itemize}
\item the DAG is built from  operations from
Table~\ref{tab:dag_rules} excluding self-attention (see the closing remark); 
\item for each node $v$, the levels $\delta_v\in(0,1)$ and $\tilde{\delta}_v\in[0,1)$ are fixed, and $B_v$ is a deterministic bound satisfying $\Pp(\norm{S'_v} \le B_v) \ge 1 - \tilde{\delta}_v$, with $\tilde{\delta}_{v_{\mathrm{in}}} = 0$ under the bounded-support convention;
\item \revadd{for each $v\in\mathcal V_{\mathrm{prob}}$ with parent $u$, $\tilde Z_v$ has finite conditional mean and variance given $E_u$}; 
\item $\sum_{v \in \mathcal{V}_{\mathrm{prob}}} \delta_v + \sum_{v \in \mathcal{V}} \tilde{\delta}_v < 1$, which by the invariant~\eqref{eq:invariant} keeps every conditioning event of positive probability.
\end{itemize}
 Then,  defining events
$E_v := \{\norm{\Delta_v} \le A_v\} \cap \{\norm{S'_v} \le B_v\}$ for all nodes $v$, we have
\begin{equation}
  \Pp\bigl( \norm{\Delta_{v_{\mathrm{out}}}} > A_{v_{\mathrm{out}}} \bigr)
  \;\le\;  \sum_{v \in \mathcal{V}_{\mathrm{prob}}} \delta_v
  \;+\; \sum_{v \in \mathcal{V}} \tilde{\delta}_v .
\end{equation}
\end{lemma}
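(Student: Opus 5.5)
We follow the proof of Theorem~\ref{thm:layerwise}, replacing the layer index by a topological order of the nodes. Fix a topological enumeration of $\mathcal V$ and write $\mathrm{anc}(v)$ for the set consisting of $v$ and all its ancestors. The key invariant, proved by induction along this order, is
\begin{equation}
\label{eq:invariant}
\Pp(E_v^c)\;\le\;\sum_{w\in\mathrm{anc}(v)\cap\mathcal V_{\mathrm{prob}}}\delta_w\;+\;\sum_{w\in\mathrm{anc}(v)}\tilde{\delta}_w .
\end{equation}
Because $\mathrm{anc}(v)\subseteq\mathcal V$, the right-hand side is at most the total budget, which is $<1$. Hence $\Pp(E_u)>0$ for every parent $u$, and all conditional moments used to define $A_v$ are well defined.

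\emph{Base case.} At $v_{\mathrm{in}}$ we have $\Delta_{v_{\mathrm{in}}}=0=A_{v_{\mathrm{in}}}$ and $\tilde{\delta}_{v_{\mathrm{in}}}=0$, so $\Pp(E_{v_{\mathrm{in}}}^c)=0$.

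\emph{Inductive step.} For each node we bound $F_v:=\{\norm{\Delta_v}>A_v\}$, then use $E_v^c=F_v\cup\{\norm{S'_v}>B_v\}$ to add $\tilde{\delta}_v$. There are two kinds of nodes.

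For $v\in\mathcal V_{\mathrm{prob}}$ with parent $u$, the domination steps (1), (2), (2b) of the proof of Table~\ref{tab:dag_rules} show that $\norm{\Delta_v}\le\tilde Z_v$ on $E_u$. This uses the nonnegativity of the realized ratios, $\norm{\Delta_u}\le A_u$ and $\norm{S'_u}\le B_u$. Conditional Cantelli, exactly as in Step~2 of the proof of Theorem~\ref{thm:layerwise}, including the case of zero variance, gives $\Pp(\tilde Z_v>A_v\mid E_u)\le\delta_v$. Then
\[
F_v\subseteq(\{\tilde Z_v>A_v\}\cap E_u)\cup E_u^c
\]
gives $\Pp(F_v)\le\delta_v+\Pp(E_u^c)$, and the invariant at $u$ yields the invariant at $v$ because $\mathrm{anc}(u)\subseteq\mathrm{anc}(v)\setminus\{v\}$.

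For deterministic nodes, the rules (3)--(7) show pointwise that the error bounds hold at $v$ whenever they hold at the parents. For example, for addition,
\[
\norm{\Delta_v}\le\norm{\Delta_{u_1}}+\norm{\Delta_{u_2}}\le A_{u_1}+A_{u_2}=A_v
\]
on $E_{u_1}\cap E_{u_2}$. For the additive embedding, we use that $L_e$ is deterministic because $e$ is fixed. Therefore $F_v\subseteq E_{u_1}^c\cup E_{u_2}^c$, or $F_v\subseteq E_u^c$ for a single parent.

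\emph{Main obstacle: shared ancestors.} A union bound over two parents naively counts shared ancestors twice, which would break the invariant. To avoid this, I will strengthen the induction statement to an event-level one. Define the bad events $D_w:=\{\tilde Z_w>A_w\}\cap E_{u(w)}$ for $w\in\mathcal V_{\mathrm{prob}}$ with parent $u(w)$, and $C_w:=\{\norm{S'_w}>B_w\}$. The claim is then
\[
E_v^c\subseteq\bigcup_{w\in\mathrm{anc}(v)\cap\mathcal V_{\mathrm{prob}}}D_w\;\cup\;\bigcup_{w\in\mathrm{anc}(v)}C_w .
\]
This set inclusion is preserved under unions over several parents: the union of the ancestor sets of the parents is contained in $\mathrm{anc}(v)$, so no term is duplicated. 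Each $D_w$ satisfies $\Pp(D_w)\le\Pp(\tilde Z_w>A_w\mid E_{u(w)})\le\delta_w$, and each $C_w$ satisfies $\Pp(C_w)\le\tilde{\delta}_w$. Taking probabilities gives \eqref{eq:invariant}.

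\emph{Conclusion.} Applying the inclusion at $v_{\mathrm{out}}$, and using $F_{v_{\mathrm{out}}}\subseteq E_{v_{\mathrm{out}}}^c$ together with $\mathrm{anc}(v_{\mathrm{out}})\subseteq\mathcal V$, yields the stated bound after summing over all of $\mathcal V$.
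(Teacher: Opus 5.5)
Your proposal is correct and matches the paper's proof: the set inclusion of $E_v^c$ into the union of the local Cantelli failure events and the output failure events over the ancestors of $v$ is exactly the paper's invariant. It is proved by induction in topological order and followed by a single union bound at the output. Your opening probability-level invariant only holds as a consequence of that event-level inclusion, and you correctly see that this strengthening is needed to avoid double counting shared ancestors at merge nodes.
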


\begin{proof}
For each \revadd{node $v$ where conditional Cantelli is applied} with parent $u$, define the
\emph{local failure event}
\begin{equation} \label{eq:def_F_v}
  F_v := \{\tilde Z_v > A_v\} \cap E_u,
\end{equation}

Here $\tilde Z_v$ is the comparison variable defined above. Its local failure probability is bounded during the induction below. For each node $v$, define the \emph{output failure event} $H_v := \{\norm{S'_v} > B_v\}$, with $\Pp(H_v) \le \tilde{\delta}_v$ by the assumed bound on $B_v$.

We prove, by induction in topological order, the set inclusion
\begin{equation} \label{eq:invariant}
  E_v^c \;\subseteq\;
  \bigcup_{w \in \mathcal{V}_{\mathrm{prob}},\, w \preceq v} F_w
  \;\cup\; \bigcup_{w \preceq v} H_w ,
\end{equation}
where $w \preceq v$ ranges over the ancestors of $v$ (including $v$). Because the right-hand side is a union of \emph{sets}, an ancestor shared by several branches contributes a single set, and no double counting can occur at merge nodes; the union bound is paid exactly once, at the output.

\emph{Base case.} At the input node,
$\Delta_{v_{\mathrm{in}}} = 0 = A_{v_{\mathrm{in}}}$ deterministically, so $E_{v_{\mathrm{in}}}^c = H_{v_{\mathrm{in}}}$ and~\eqref{eq:invariant} holds.

\emph{Inductive step.} 

Assume that the invariant and the local failure bounds hold at all previously processed nodes. For each parent $u$,
\begin{equation}
\Pp(E_u)\ge
1-\sum_{\substack{w\in\mathcal V_{\mathrm{prob}}\\w\preceq u}}\delta_w
-\sum_{w\preceq u}\tilde{\delta}_w>0.
\end{equation}
The last inequality follows from the total probability budget in the lemma. Thus the conditional moments used at $v$ are well defined.

\emph{A single-parent node using conditional Cantelli.}
On $E_u$, we have $\norm{\Delta_v}\le\tilde Z_v$. Hence
\begin{equation}
\{\norm{\Delta_v}>A_v\}\cap E_u
\subseteq\{\tilde Z_v>A_v\}\cap E_u=F_v.
\end{equation}
Conditional Cantelli gives
\begin{equation}
\Pp(F_v)=\Pp(\tilde Z_v>A_v\mid E_u)\Pp(E_u)\le\delta_v,
\end{equation}
including the zero-variance case as in~\eqref{eq:cond_cheb}. Since $E_v^c=\{\norm{\Delta_v}>A_v\}\cup H_v$,
\begin{equation}
E_v^c\subseteq F_v\cup E_u^c\cup H_v.
\end{equation}
The induction hypothesis at $u$ proves the invariant at $v$.

\emph{A single-parent node using a deterministic update.}
Write the update as $A_v=cA_u+a$. For a pointwise nonlinearity, upsampling, and additive embedding, respectively,
\begin{equation}
(c,a)=(L_\sigma,0),\qquad (c,a)=(s^{d/2},0),\qquad (c,a)=(1,L_e).
\end{equation}
The embedding case uses the deterministic $e$ assumed above. On $E_u$,
\begin{equation}
\norm{\Delta_v}\le c\norm{\Delta_u}+a\le cA_u+a=A_v.
\end{equation}
Therefore,
\begin{equation}
E_v^c\subseteq E_u^c\cup H_v.
\end{equation}
The induction hypothesis proves the invariant at $v$. There is no local event $F_v$ in this case.

\emph{An addition or concatenation node.}
On $E_{u_1}\cap E_{u_2}$, addition gives
\begin{equation}
\norm{\Delta_v}\le\norm{\Delta_{u_1}}+\norm{\Delta_{u_2}}
\le A_{u_1}+A_{u_2}=A_v.
\end{equation}
For concatenation,
\begin{equation}
\norm{\Delta_v}^2=\norm{\Delta_{u_1}}^2+\norm{\Delta_{u_2}}^2
\le A_{u_1}^2+A_{u_2}^2=A_v^2.
\end{equation}
Thus, in both cases,
\begin{equation}
E_v^c\subseteq E_{u_1}^c\cup E_{u_2}^c\cup H_v.
\end{equation}
Every ancestor of either parent is an ancestor of $v$. Applying the induction hypothesis at both parents and adding $H_v$ proves the invariant. Shared ancestors appear only once in the union.

\emph{Conclusion.} Applying~\eqref{eq:invariant} at the output node, the inclusion $\{\norm{\Delta_{v_{\mathrm{out}}}} > A_{v_{\mathrm{out}}}\} \subseteq E_{v_{\mathrm{out}}}^c$, and the union bound once,
\begin{equation}
  \Pp\bigl(\norm{\Delta_{v_{\mathrm{out}}}} > A_{v_{\mathrm{out}}}\bigr)
  \;\le\; \sum_{w \in \mathcal{V}_{\mathrm{prob}}} \Pp(F_w) + \sum_{w \in \mathcal{V}} \Pp(H_w) \;\le\;
  \sum_{w \in \mathcal{V}_{\mathrm{prob}}} \delta_w + \sum_{w \in \mathcal{V}} \tilde{\delta}_w . \qedhere
\end{equation}
\end{proof}

\begin{remark}[Scope]
	Lemma~\ref{lem:layerwise_dag} covers the operations in Table~\ref{tab:dag_rules} except self-attention. At a self-attention node, the comparison variable $X^{\mathrm{att}}_e A_u + L^{\mathrm{att}}_e$ dominates $\norm{\Delta_v}$ on $E_u$ by the same argument, so the lemma extends at the cost of finite conditional moments of $(X^{\mathrm{att}}_e, L^{\mathrm{att}}_e)$ given $E_u$, including their conditional covariance, the comparison variable being the linear combination $X^{\mathrm{att}}_e A_u + L^{\mathrm{att}}_e$; we do not use this extension in our experiments, where attention graphs are handled by the global bound of Lemma~\ref{lem:global}.
\end{remark}

\subsection{Sensitivity of the empirical functionals}
\label{app:functional}

Let $n\ge2$ and let $P_1, \dots, P_n \in [0, M]$ be the calibration values of $P_{g,b}$ and replace one sample $P_j$ by an arbitrary
$P'_j \in [0, M]$. The empirical mean moves by
$|\hat\mu' - \hat\mu| = |P'_j - P_j| / n \le M/n$, a bound attained when one sample jumps across the full range. For the standard deviation with divisor $n-1$, $\hat\sigma^2 = \frac{n}{n-1}(\frac1n \sum_i P_i^2 - \hat\mu^2)$ gives $|\hat\sigma'^2 - \hat\sigma^2| \le 3M^2/(n-1)$, and $|\sqrt{a} - \sqrt{b}| \le \sqrt{|a - b|}$ yields
$|\hat\sigma' - \hat\sigma| \le M \sqrt{3/(n-1)}$. The empirical maximum admits no such bound: with $P_1 = M$ and $P_2 = \dots = P_n = 0$, replacing $P_1$ by $0$ moves the maximum by $M$ for every $n$.

At the level of the score~\eqref{eq:local_score} the two rates combine: the worst-case displacement of $\sloc_g(b)$ is at most $M/n + (M/\sqrt{\delta_{\mathrm{score}}})\sqrt{3/(n-1)}$, dominated by the standard-deviation term scaled by $1/\sqrt{\delta_{\mathrm{score}}}$. At the operating
point of our experiments ($n = 256$, $\delta_{\mathrm{score}} = 0.05$) this evaluates to about $0.49\,M$, against $M$ for the maximum; the asymptotic separation in $n$ is therefore partly consumed by the $1/\sqrt{\delta_{\mathrm{score}}}$ factor at practical sample sizes. This replacement calculation concerns one changed value; it is not a stability guarantee for the selected allocation.


%% file: appendix/experiments_appendix.tex
\section{Experimental details and complete controls}
\label{app:experiments}

\subsection{DRUNet implementation details}
\label{app:drunet_details}

The $36$ allocation blocks contain $64$ convolutions: a one-convolution head; three downsampling stages with four residual blocks and one downsampling convolution each; four residual blocks in the body; three upsampling stages with one upsampling convolution and four residual blocks each; and a one-convolution tail. Activations stay in FP32. Weight quantization is symmetric and per output channel within the RTN comparisons. The AIMET comparison uses the common TF-Enhanced backend described in Section~\ref{sec:common_setup}. It runs both official AMP selection phases, adapted to our blocks and weight-memory budget. The selection uses calibration SQNR, not the test set.

For every data-dependent method, calibration seeds 0, 1, and 2 select $256$ images each. The fixed test set contains $100$ images selected with seed 42. The input denoising noise has standard deviation $0.1$. The clean target, test input, and evaluation code are shared within every comparison. The reported average bit-widths are weighted by the number of weights in each block.

\paragraph{Confidence intervals.}
For each test image, we compute the PSNR difference between the two methods for each of the three calibration runs. We then average the three differences. This gives $100$ values, one per test image. The $95\%$ confidence interval for their mean is $\bar d\pm c\,s_d/\sqrt{100}$, where $s_d$ is their sample standard deviation. CLADO uses the normal critical value $c=1.960$. The reported AIMET intervals are reproduced with the Student critical value $c=1.984$ ($99$ degrees of freedom). These intervals do not account for new calibration sets.

\begin{table}[h]
\caption{DRUNet comparison settings. The block partition and
parameter-weighted budget are shared within every \revadd{baseline} comparison.}
\label{tab:external_protocols}
\centering
\setlength{\tabcolsep}{3.5pt}
\normalsize
\begin{tabular}{@{}p{0.30\linewidth}p{0.20\linewidth}p{0.15\linewidth}p{\dimexpr0.35\linewidth-21pt\relax}@{}}
\toprule
Reference & Quantizer & \shortstack[c]{Candidate\\bit-widths} & Allocation rule \\
\midrule
OMPQ-ORM \citep{ma2023ompq} & per-channel RTN & 3--8 & adapted ORM + MILP \\
CLADO \citep{deng2023clado} & per-channel RTN & 3, 4, 8 & pairwise IQP \\
AIMET AMP \citep{pandey2023practical} & TF-Enhanced & 4, 6, 8 & sensitivity + Pareto \\
Adapted output-SQNR \citep{pandey2023practical} & per-channel RTN & 3--8 & output SQNR + greedy \\
\bottomrule
\end{tabular}
\end{table}

\subsection{Clean output-SQNR control for different budgets}
\label{app:sqnr_control}
\label{app:clean_sweeps}

This clean control adapts the output-SQNR sensitivity used by
\citet{pandey2023practical} to the common DRUNet blocks and weight-only RTN backend. One block is quantized at a time, the final DRUNet output is compared with the FP32 output, and the resulting negative SQNR is passed to the same greedy allocation rule. It is a score control, not the complete AIMET AMP pipeline. Writing $f_{W^{g\to b}}$ for the network in which only block $g$ is quantized to $b$ bits, the implemented score is
\[
\operatorname{SQNR}_{g,b}=10\log_{10}
\frac{\sum_i\lVert f_W(x_i)\rVert_F^2}
{\max\!\left\{\sum_i\lVert f_W(x_i)-f_{W^{g\to b}}(x_i)\rVert_F^2,
\epsilon_{64}\right\}},
\qquad s^{\mathrm{SQNR}}_{g,b}=-\operatorname{SQNR}_{g,b},
\]
where the sums cover all calibration images and output elements, and $\epsilon_{64}$ is the smallest positive \texttt{float64} value.

\begin{table}[h]
	\caption{PTQ with different budgets. Entries are PSNR in dB, with realized average bit-widths in parentheses. Our method, output-SQNR, and OMPQ-ORM use W3--W8; CLADO uses W3/W4/W8.}
	\label{tab:sqnr_control}
	\centering
\setlength{\tabcolsep}{4pt}
	\normalsize
	\begin{tabular}{ccccc}
		\toprule
		$B$ & Ours & Output-SQNR & OMPQ-ORM & CLADO \\
		\midrule
		3.5 & 33.300 (3.483) & 31.957 (3.494) & 31.970 (3.494) & 34.101 (3.494) \\
		4.0 & 34.401 (3.987) & 34.247 (3.988) & 32.587 (3.988) & 34.409 (3.988) \\
		4.5 & 34.512 (4.439) & 34.446 (4.494) & 33.023 (4.494) & 34.503 (4.458) \\
		5.0 & 34.574 (4.895) & 34.460 (4.964) & 34.453 (4.964) & 34.578 (4.992) \\
		\bottomrule
	\end{tabular}
\end{table}
\subsection{CLADO control with identical candidate bit-widths}
\label{app:candidate_matched}

The main CLADO comparison uses each algorithm in its intended candidate space: W3/W4/W8 for CLADO and W3--W8 for our method. The corrected control in Table~\ref{tab:clado_b348} also restricts our algorithm to W3/W4/W8. Each move is ranked by score increase per weight bit saved, using the same denominator $(b_g-b_g^{-})n_g$ as in Section~\ref{sec:greedy}. Thus only the candidate set changes. Both methods share a target ceiling, but their realized precision can still differ.

\begin{table}[h]
\caption{Clean CLADO control with the same W3/W4/W8 candidates for both
methods.}
\label{tab:clado_b348}
\centering
\setlength{\tabcolsep}{3.5pt}
\normalsize
\begin{tabular}{ccccc}
\toprule
$B$ & \shortstack[c]{Our mean\\bit-width} & Our PSNR & \shortstack[c]{CLADO mean\\bit-width} & CLADO PSNR \\
\midrule
3.5 & 3.4704 & 32.9967 & 3.4935 & 34.1012 \\
4.0 & 3.9684 & 34.4081 & 3.9884 & 34.4089 \\
4.5 & 4.4021 & 34.4629 & 4.4583 & 34.5028 \\
5.0 & 4.9803 & 34.5139 & 4.9923 & 34.5776 \\
\bottomrule
\end{tabular}
\end{table}

Table~\ref{tab:clado_b348_corrupt} uses the same candidates at $B=4$ under the two corruptions. The clean values are almost identical. Under corrupted calibration, our allocation keeps the same PSNR while CLADO loses more than $5$~dB. This control shows that the corruption result is not created by giving our method intermediate bit-widths.

\begin{table}[h]
\caption{CLADO control at target $B=4$ with W3/W4/W8 candidates for both methods. Positive differences favor our method.}
\label{tab:clado_b348_corrupt}
\centering
\setlength{\tabcolsep}{3.2pt}
\normalsize
\begin{tabular}{lccccc}
\toprule
Calibration & \shortstack[c]{Our mean\\bit-width} & Our PSNR & \shortstack[c]{CLADO mean\\bit-width} & CLADO PSNR & Our gain \\
\midrule
clean & 3.9684 & 34.4081 & 3.9884 & 34.4089 & $-0.0008$ \\
$p=0.5$ shared & 3.9684 & 34.4081 & 3.9392 & 27.8235 & $+6.5845$ \\
$p=0.9$ per-channel & 3.9684 & 34.4081 & 3.9870 & 28.9281 & $+5.4800$ \\
\bottomrule
\end{tabular}
\end{table}

\subsection{Measured selection time}
\label{app:runtime}

We measure the complete mixed-precision selection stage on DRUNet with $256$ clean calibration images. Table~\ref{tab:recorded_runtime} reports median times over three runs after warm-up. Each comparison uses the
backend and candidate bit-widths of the reference method. Model loading, data preparation, quantization application, and PSNR evaluation are excluded.

\begin{table}[h]
\caption{Median DRUNet mixed-precision selection times over three runs.
Speed-up is the reference time divided by our matched time.}
\label{tab:recorded_runtime}
\centering
\setlength{\tabcolsep}{4pt}
\normalsize
\begin{tabular}{lccc}
\toprule
Matched comparison & Our time & Reference time & Speed-up \\
\midrule
AIMET backend & \textbf{1 min 29 s} & AIMET AMP: 42 min 04 s & $28.4\times$ \\
OMPQ, fixed $\beta$ & \textbf{15.7 s} & OMPQ-ORM: 8 min 01 s & $30.6\times$ \\
OMPQ, including $\beta$ search & \textbf{15.7 s} & OMPQ-ORM: 13 min 51 s & $52.9\times$ \\
CLADO W3/W4/W8 & \textbf{8.66 s} & CLADO: 6 h 11 min & $2{,}570\times$ \\
\bottomrule
\end{tabular}
\end{table}

All methods use one NVIDIA A100-PCIE-40GB, except CLADO, which uses two. The reported CLADO speed-up compares the time required to complete the selection.

\begin{reviewaddition}
\paragraph{Layerwise allocation.}
We add the recorded times for collecting local perturbations, computing scores and assigning bit-widths. We count the full collection shared with the blockwise variant and exclude PSNR evaluation. The median over three calibration seeds is $24.32$ seconds on one A100. These measurements are separate from the benchmark in Table~\ref{tab:recorded_runtime}.
\end{reviewaddition}

\subsection{Complete DRUNet curves and allocation stability}
\label{app:full_curves}

Figure~\ref{fig:clado_corruption_curves} reports the  CLADO comparison, including a control with the same candidate bit-widths.

Figures~\ref{fig:clado_stability}, \ref{fig:ompq_curves},
\ref{fig:ompq_stability}, \ref{fig:aimet_curves}, and
\ref{fig:aimet_stability} give the corresponding allocation diagnostics and the complete OMPQ and AIMET comparisons. In the recorded $\sigma_{\mathrm{in}}=0.20$ AIMET condition, the input-noise standard deviation changes from $0.10$ to $0.20$; the block partition, calibration size, candidate bit-widths, and evaluation procedure are unchanged.

\begin{figure}[!t]
\centering
\includegraphics[width=\textwidth]{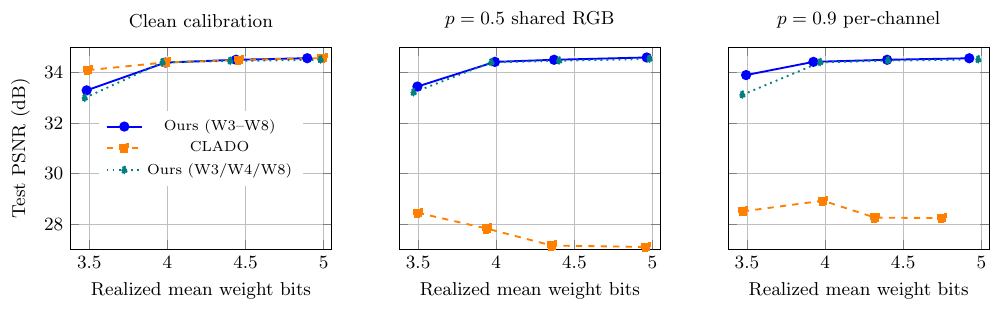}
\caption{DRUNet PSNR versus average bit-width for the adapted CLADO comparison. \revadd{Higher PSNR is better.} The dotted curve uses W3/W4/W8 for both methods. Under both calibration corruptions, our method achieves much higher PSNR than CLADO, even with the same candidate bit-widths.}
\label{fig:clado_corruption_curves}
\end{figure}

\begin{figure}[!t]
\centering
\includegraphics[width=0.86\textwidth]{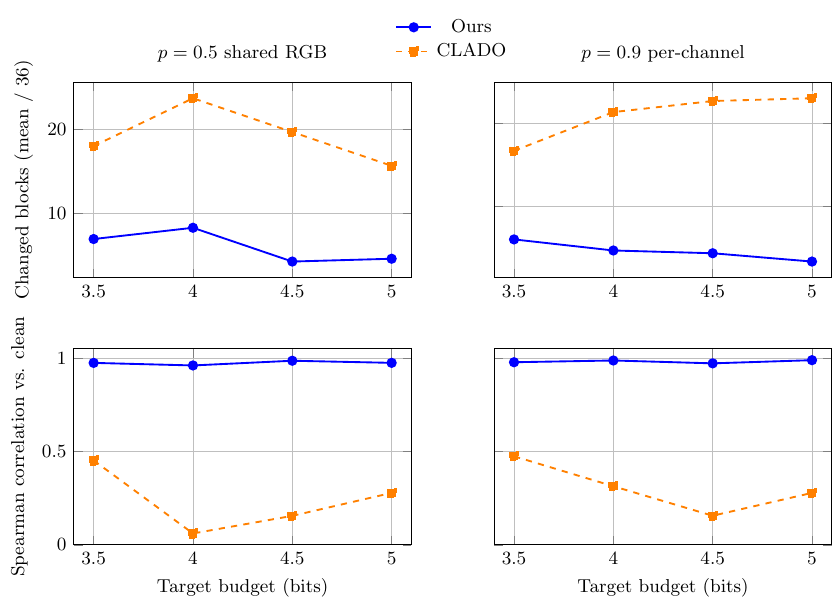}
\caption{Allocation stability under corrupted calibration for our method and CLADO. Top: changed blocks \revadd{(lower is better)}. Bottom: Spearman correlation with the configuration $\boldsymbol{b}$ calculated with a clean dataset \revadd{(higher is better)}. Values are averaged over three calibration seeds. Our method changes fewer block bit-widths than CLADO and better preserves their ordering under both calibration corruptions.}
\label{fig:clado_stability}
\end{figure}

\begin{figure}[!t]
\centering
\includegraphics[width=\textwidth]{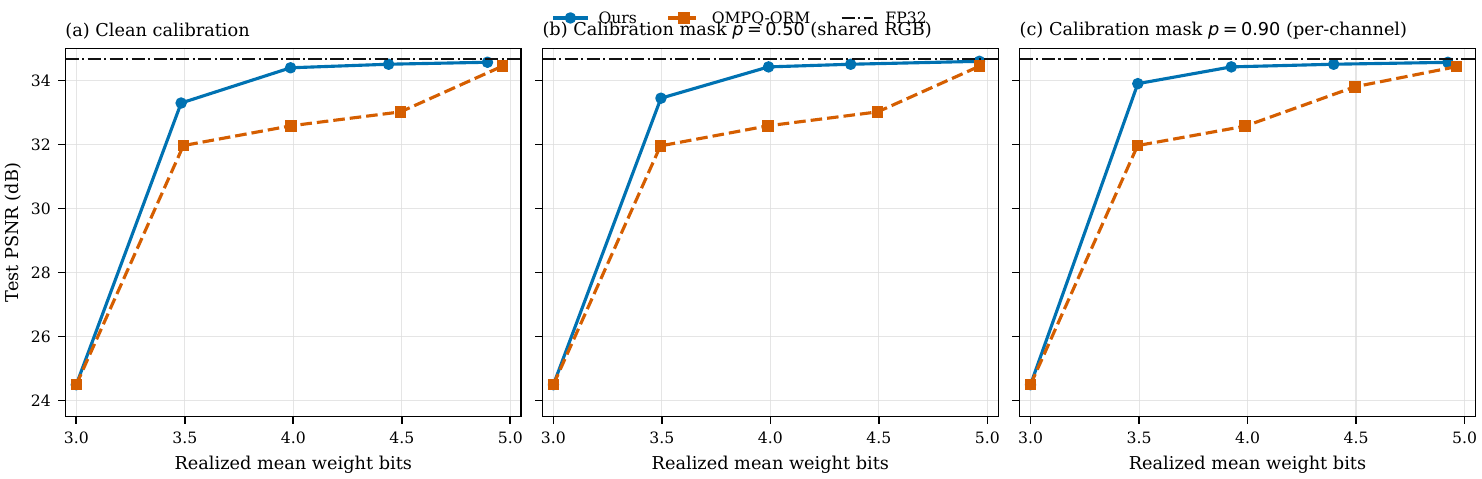}
\caption{DRUNet PSNR versus average bit-width for the adapted OMPQ-ORM comparison under clean and corrupted calibration. \revadd{Higher PSNR is better.} Our method achieves higher PSNR at intermediate budgets under both clean and corrupted calibration.}
\label{fig:ompq_curves}
\end{figure}

\begin{figure}[!t]
\centering
\includegraphics[width=0.88\textwidth]{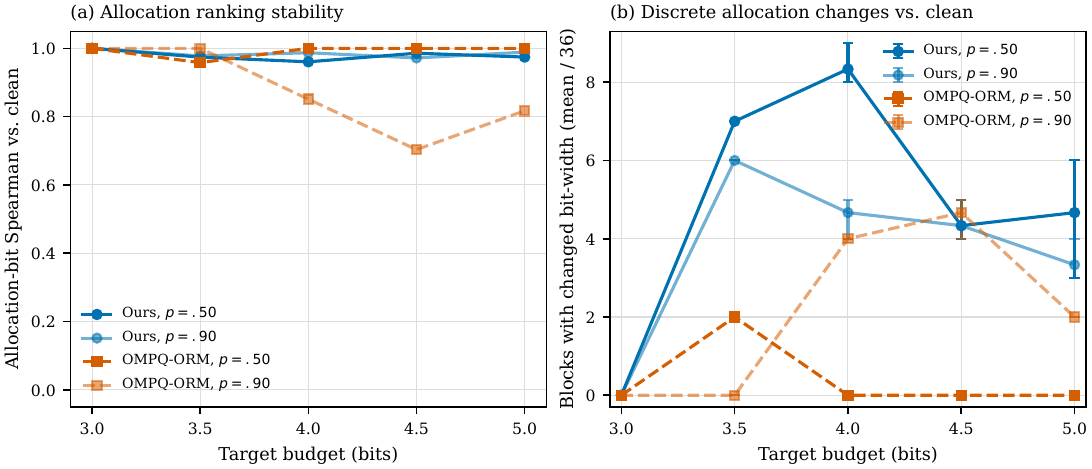}
\caption{Allocation stability for the adapted OMPQ-ORM comparison under corrupted calibration. \revadd{Left: Spearman correlation (higher is better). Right: changed blocks (lower is better).} OMPQ-ORM often changes fewer block bit-widths, but its bit-width ordering is less stable under per-channel corruption at higher budgets.}
\label{fig:ompq_stability}
\end{figure}

\begin{figure}[!t]
\centering
\includegraphics[width=0.90\textwidth]{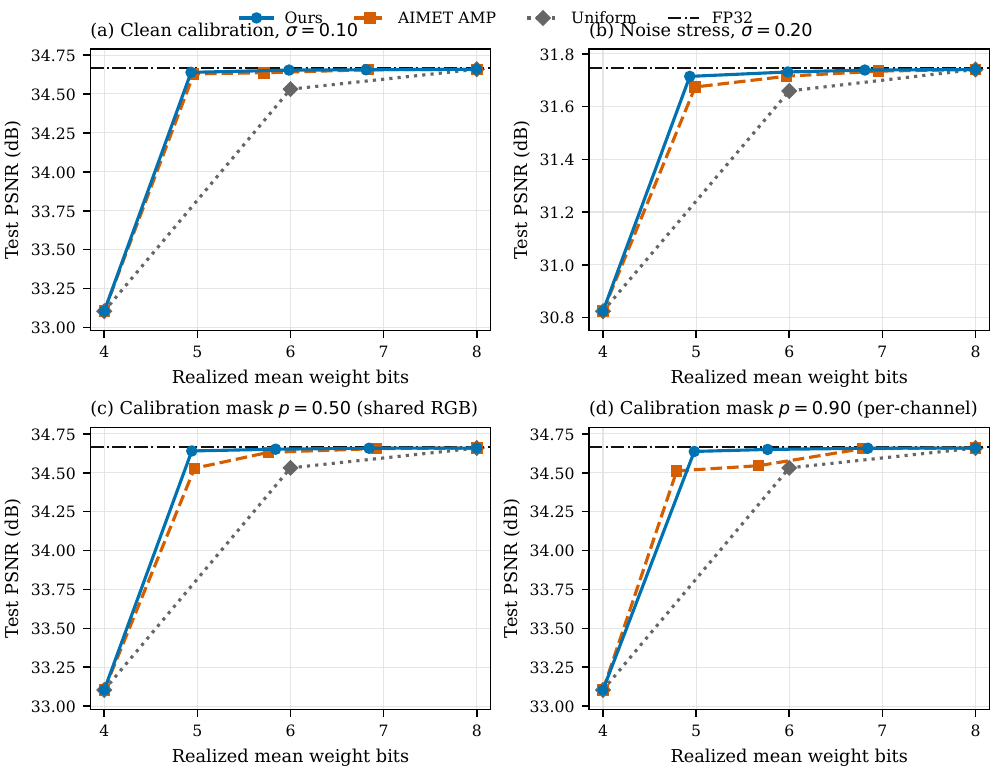}
\caption{DRUNet PSNR versus average bit-width for AIMET under clean and corrupted calibration and at $\sigma_{\mathrm{in}}=0.20$. \revadd{Higher PSNR is better.} Our method has an advantage at intermediate budgets under corrupted calibration, while both methods approach FP32 PSNR at higher budgets.}
\label{fig:aimet_curves}
\end{figure}

\begin{figure}[!t]
\centering
\includegraphics[width=0.88\textwidth]{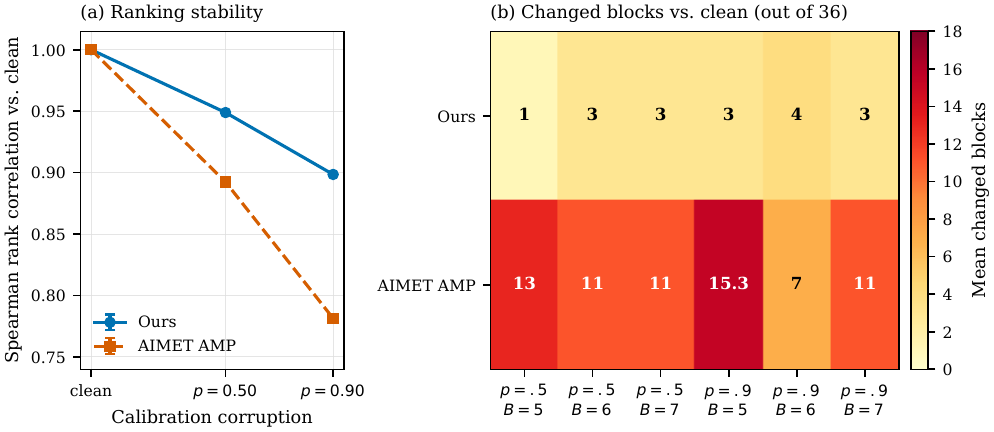}
\caption{AIMET allocation stability under the conditions in
Figure~\ref{fig:aimet_curves}. \revadd{Left: Spearman correlation (higher is better). Right: changed blocks (lower is better).} Our method changes fewer block bit-widths than AIMET and better preserves their ordering under both calibration corruptions.}
\label{fig:aimet_stability}
\end{figure}

\clearpage
\subsection{Fixed-random qualitative examples}
\label{app:qualitative}

The following panels use indices selected with fixed random seeds before inspecting method differences. They are illustrations only; all numerical conclusions use the same $100$ test images and three calibration sets as the quantitative tables. Images are clipped to $[0,1]$ for display, while PSNR is computed from the unclipped model outputs. The average bit-width labels in the panels describe the displayed calibration run, whereas the quantitative tables average three runs. Figures~\ref{fig:clado_qualitative},
\ref{fig:ompq_qualitative}, and~\ref{fig:aimet_qualitative} show the three comparisons.

\begin{figure}[!t]
\centering
\includegraphics[width=\textwidth]{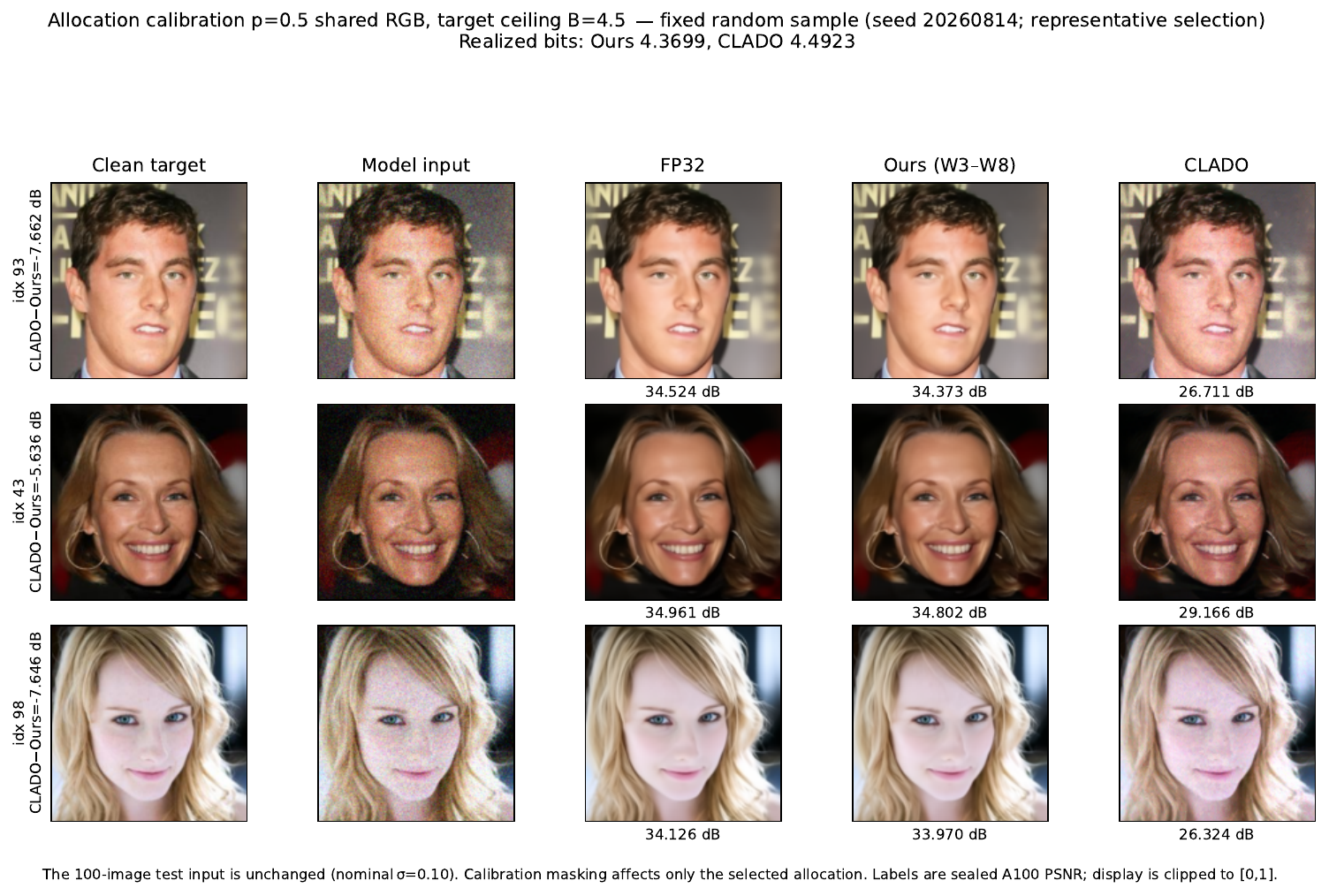}
\caption{Fixed-random DRUNet examples for the adapted CLADO comparison at $B=4.5$ under the $p=0.5$ shared calibration corruption. Test inputs are unchanged, and the displayed allocation uses calibration seed 0. In these examples, our outputs remain close to FP32, while CLADO leaves visibly more residual noise.}
\label{fig:clado_qualitative}
\end{figure}

\begin{figure}[!t]
\centering
\includegraphics[width=\textwidth]{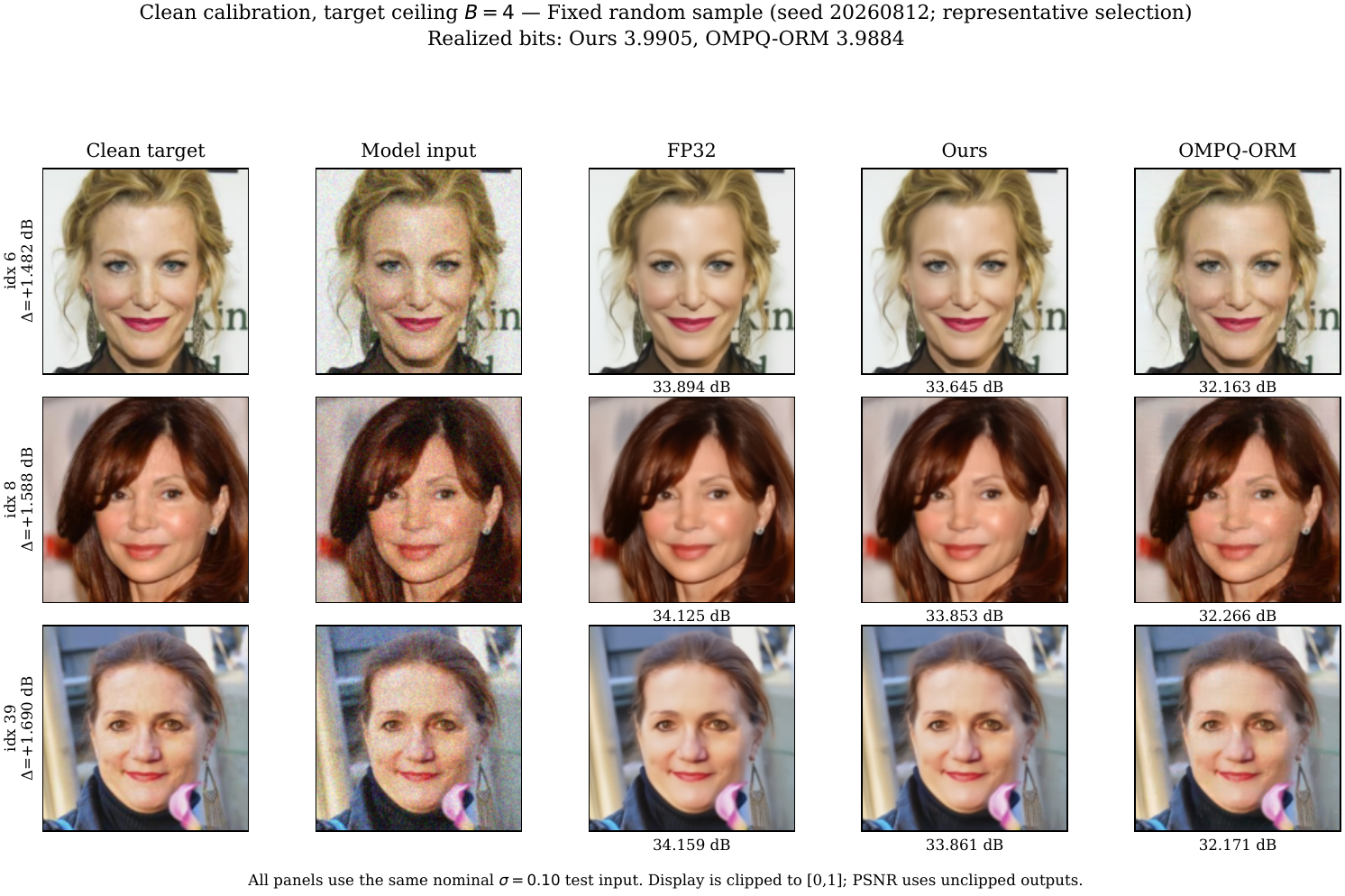}
\caption{Fixed-random DRUNet examples for the adapted OMPQ-ORM comparison at $B=4$ under clean calibration. Our outputs remain close to FP32 and have higher PSNR than OMPQ-ORM in these examples.}
\label{fig:ompq_qualitative}
\end{figure}

\begin{figure}[!t]
\centering
\includegraphics[width=\textwidth]{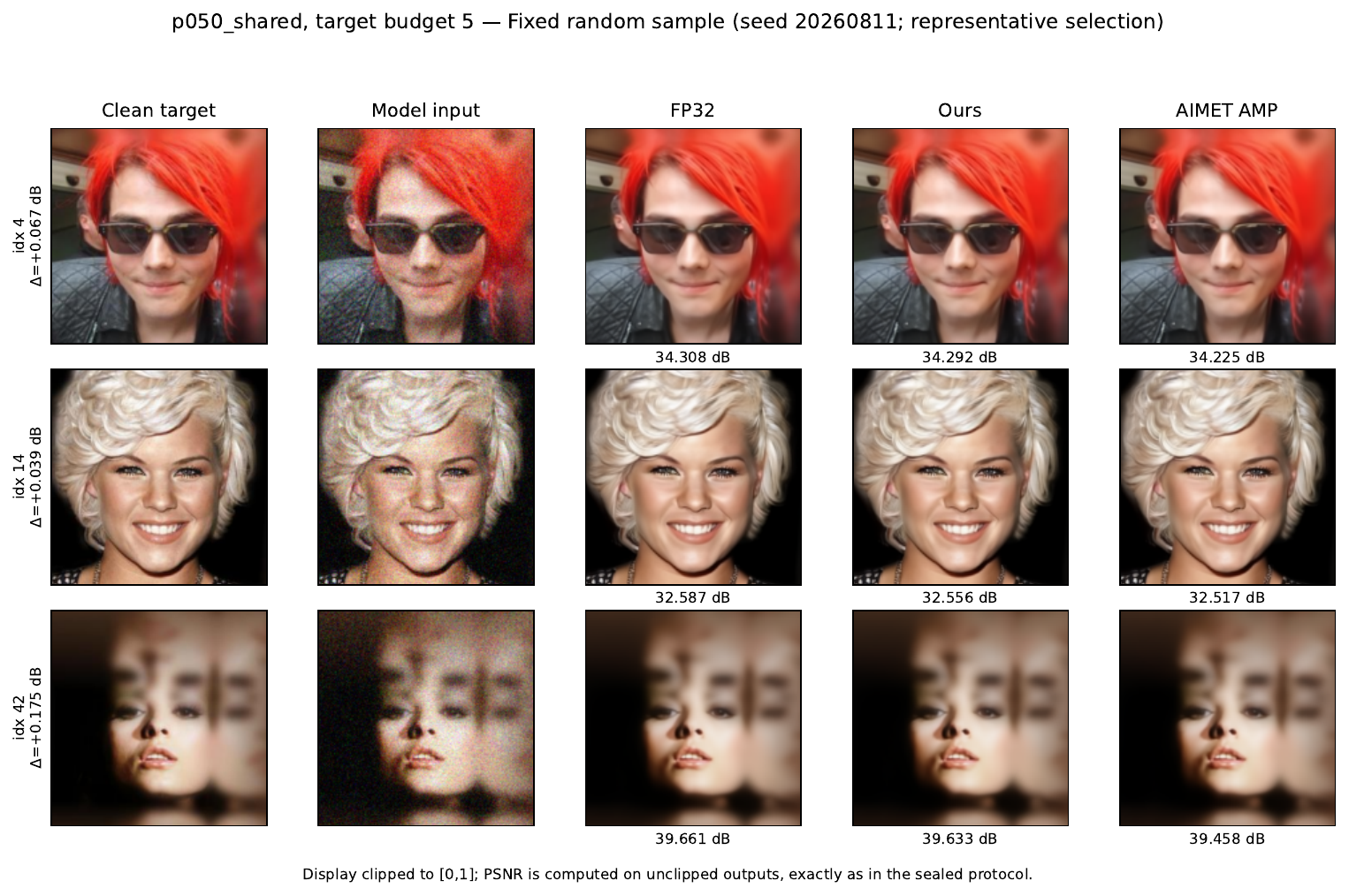}
\caption{Fixed-random DRUNet examples for AIMET at $B=5$ under the $p=0.5$ shared calibration corruption. Both methods remain visually close to FP32, with slightly higher PSNR for our method in these examples.}
\label{fig:aimet_qualitative}
\end{figure}

\clearpage
\subsection{Latent-diffusion protocol and metric scope}
\label{app:ldm_transfer}

We use the pretrained CompVis CelebA-HQ $256\times256$ latent diffusion model \citep{karras2018progressive,rombach2022ldm}. Weight reconstruction for our method and Q-Diffusion uses the Q-Diffusion/AdaRound backend \citep{li2023qdiffusion}. The local score and greedy allocation rule are unchanged from the DRUNet experiment.

Our method and Q-Diffusion use the same $5{,}120$ calibration timestep--latent pairs: $256$ pairs at each of $20$ timesteps, generated with seed 1234. The U-Net has $123$ quantized weight tensors grouped into $48$ allocation blocks. Our allocation uses candidates W3--W8 and reaches a parameter-weighted average of $3.9932$ bits, with full-precision activations.

All methods generate $50{,}000$ images with DDIM-200 \citep{song2021ddim} and
$\eta=0$. They use seeds 40 and 41, the same historical $25{,}000$-image reference, and
\texttt{torch-fidelity} 0.3.0.

\begin{table}[h]
\caption{CelebA-HQ results on $50{,}000$ images. Paired PSNR is the mean of the per-image PSNR values relative to FP32.}
\label{tab:ldm_fid}
\centering

\normalsize
\begin{tabular}{lccc}
\toprule
Method & Weights / activations & FID $\downarrow$ & Paired PSNR $\uparrow$ \\
\midrule
FP32 & FP32 / FP32 & \textbf{20.807} & -- \\
Our method & Mixed ($3.9932$) / A32 & 20.853 & \textbf{34.644} \\
Q-Diffusion & W4 / A32 & 22.626 & 24.060 \\
TFMQ-DM & W4 / A32 & 20.928 & 24.545 \\
\bottomrule
\end{tabular}
\end{table}

Q-Diffusion is the closest allocation-focused control. It shares the calibration pairs and reconstruction backend with our method. Its W4 weights and our mixed-precision weights are reconstructed separately, so this is not a score-only comparison.

TFMQ-DM is a separate end-to-end pipeline designed for diffusion models \citep{huang2024tfmq}. It uses a Temporal Information Block and temporal-feature-aware weight reconstruction. It uses a uniform 4-bit weight allocation in the PSNR comparison, on the same 123 weight tensors as our method.

The four PSNR runs use aligned image indices, seeds, DDIM settings, and output counts. Pairing with historical FP32 and Q-Diffusion outputs is reconstructed from the recorded protocol, with a direct check on $1{,}000$ identical FP32 images. Exact equality of all historical initial noise tensors cannot be verified.

FID \citep{heusel2017fid} compares the generated distribution with the CelebA-HQ reference. Paired PSNR compares each quantized output with the corresponding FP32 output. \revadd{Figure~\ref{fig:ldm_psnr_distribution} shows the PSNR distributions, and Figure~\ref{fig:ldm_qualitative} shows paired generated images.}

\begin{figure}[!t]
\centering
\includegraphics[width=0.86\textwidth]{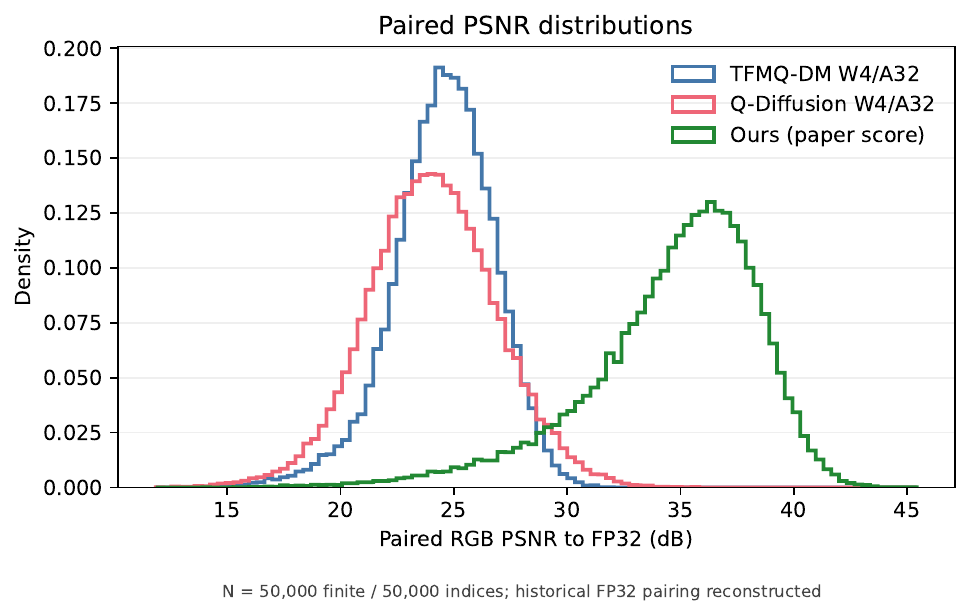}
\caption{Paired PSNR distributions over $50{,}000$ images. This metric measures fidelity to the corresponding FP32 output, not image realism or FID. \revadd{Higher PSNR is better (further right).} Our distribution is shifted towards higher PSNR values, showing closer visual similarity with the paired FP32 outputs.}
\label{fig:ldm_psnr_distribution}
\end{figure}

\begin{figure}[!t]
\centering
\includegraphics[width=\textwidth]{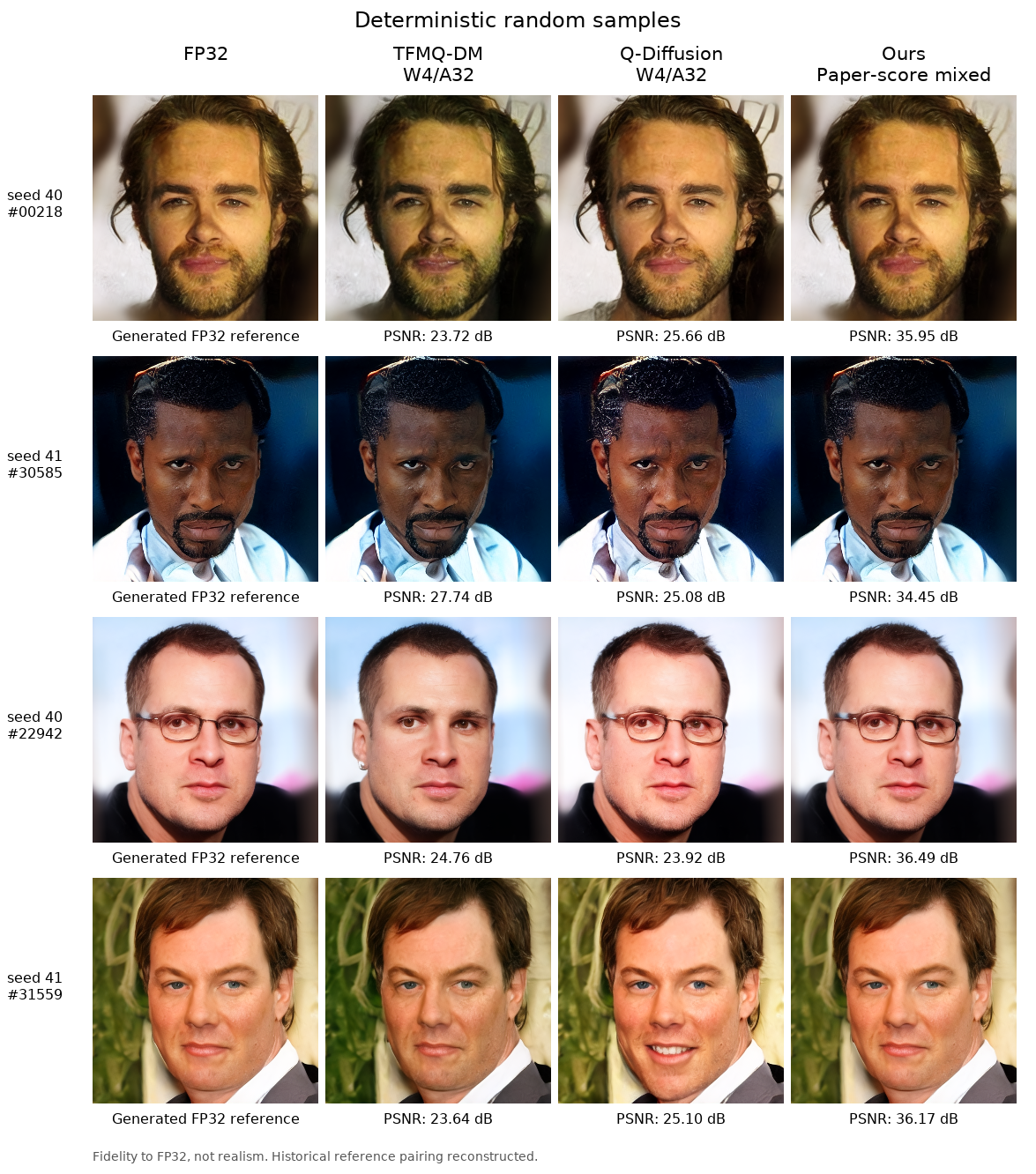}
\caption{Paired CelebA-HQ samples. The four indices were selected by a fixed random rule before the metrics and images were inspected. FP32 is the reference output, not the ground truth. These examples show that our method better preserves the facial features and expressions of the FP32 outputs.}
\label{fig:ldm_qualitative}
\end{figure}

%% file: appendix/additional_drunet.tex
\begin{reviewaddition}
\subsection{Additional natural-image results}
\label{app:bsds_results}

We repeat the DRUNet allocation experiments on natural images from BSDS500~\citep{arbelaez2011contour}. For each of three seeds, we draw $256$ calibration images from the $400$ images in the official training and test splits. We evaluate on the $100$ validation images, using central $256\times256$ RGB crops and Gaussian noise with $\sigma_{\mathrm{in}}=0.1$. Calibration and evaluation images are disjoint. The evaluation images and noise are fixed across methods and seeds. We also report results on the CBSD68 subset of these $100$ images; this is not an independent test set or the native-resolution CBSD68 protocol.

All methods use the same $36$ allocation blocks, symmetric per-output-channel RTN and FP32 intermediate outputs. We compare our method with OMPQ-ORM using W3--W8, and with CLADO using W3/W4/W8. Only calibration inputs receive the
corruptions defined in Section~\ref{sec:common_setup}. OMPQ-ORM's $\beta$ is selected on clean calibration data and then fixed across conditions. The tables report means over the three calibration seeds. Budgets are upper limits, so mean bit-widths are shown explicitly.

Table~\ref{tab:bsds_comparisons} shows that our main W3--W8 configuration remains close to CLADO under clean calibration and outperforms OMPQ-ORM. With the common W3/W4/W8 candidates, our method matches CLADO at $B=4$. CLADO has higher clean PSNR at $B=3.5$. Under both corruptions, our method performs better than CLADO with either candidate set.
Table~\ref{tab:cbsd_subset} shows the same pattern on the CBSD68 subset.

\begin{table}[!htbp]
\color{reviewadded}

\caption{BSDS500 results on $100$ images. Each entry gives PSNR in dB (higher is better), followed by mean bit-width in parentheses. FP32 obtains $31.062$~dB and uniform W4 obtains $28.160$~dB.}
\label{tab:bsds_comparisons}
\centering
\setlength{\tabcolsep}{3pt}
\begin{tabular}{llccc}
\toprule
Method & \shortstack[c]{Candidate\\bit-widths} & Clean & $p=0.5$ shared & $p=0.9$ per-channel \\
\midrule
\multicolumn{5}{c}{$B=3.5$} \\
Our method & 3--8 & 30.415 (3.479) & 30.229 (3.495) & 30.471 (3.494) \\
OMPQ-ORM & 3--8 & 29.038 (3.494) & 29.038 (3.494) & 29.043 (3.494) \\
Our method & 3, 4, 8 & 29.919 (3.470) & 30.007 (3.470) & 30.125 (3.470) \\
CLADO & 3, 4, 8 & 30.580 (3.496) & 25.782 (3.492) & 27.568 (3.500) \\
\midrule
\multicolumn{5}{c}{$B=4$} \\
Our method & 3--8 & 30.901 (3.970) & 30.899 (3.985) & 30.898 (3.916) \\
OMPQ-ORM & 3--8 & 29.551 (3.988) & 29.679 (3.988) & 29.878 (3.988) \\
Our method & 3, 4, 8 & 30.863 (3.968) & 30.863 (3.968) & 30.863 (3.968) \\
CLADO & 3, 4, 8 & 30.876 (3.988) & 26.704 (3.994) & 27.719 (3.969) \\
\bottomrule
\end{tabular}
\end{table}

\begin{table}[!htbp]
\color{reviewadded}

\caption{PSNR on the CBSD68 subset, using the same allocations and mean bit-widths as Table~\ref{tab:bsds_comparisons}. FP32 obtains $31.240$~dB and uniform W4 obtains $28.187$~dB. Higher is better.}
\label{tab:cbsd_subset}
\centering
\setlength{\tabcolsep}{5pt}
\begin{tabular}{llccc}
\toprule
Method & \shortstack[c]{Candidate\\bit-widths} & Clean & $p=0.5$ shared & $p=0.9$ per-channel \\
\midrule
\multicolumn{5}{c}{$B=3.5$} \\
Our method & 3--8 & 30.570 & 30.374 & 30.626 \\
OMPQ-ORM & 3--8 & 29.186 & 29.185 & 29.191 \\
Our method & 3, 4, 8 & 30.060 & 30.150 & 30.270 \\
CLADO & 3, 4, 8 & 30.745 & 25.829 & 27.604 \\
\midrule
\multicolumn{5}{c}{$B=4$} \\
Our method & 3--8 & 31.073 & 31.071 & 31.071 \\
OMPQ-ORM & 3--8 & 29.704 & 29.814 & 30.022 \\
Our method & 3, 4, 8 & 31.035 & 31.035 & 31.035 \\
CLADO & 3, 4, 8 & 31.047 & 26.711 & 27.778 \\
\bottomrule
\end{tabular}
\end{table}

Figures~\ref{fig:bsds_quality} and~\ref{fig:bsds_calibration_effect} show
the quality of our W3--W8 allocations and their response to calibration corruption. Figure~\ref{fig:bsds_examples} gives fixed-random examples at $B=4$, with central zooms in Figure~\ref{fig:bsds_zooms}. The three images are selected independently of image quality and PSNR, and are shared across all displayed methods and conditions.

\begin{figure}[t]
\color{reviewadded}

\centering
\includegraphics[width=\textwidth]{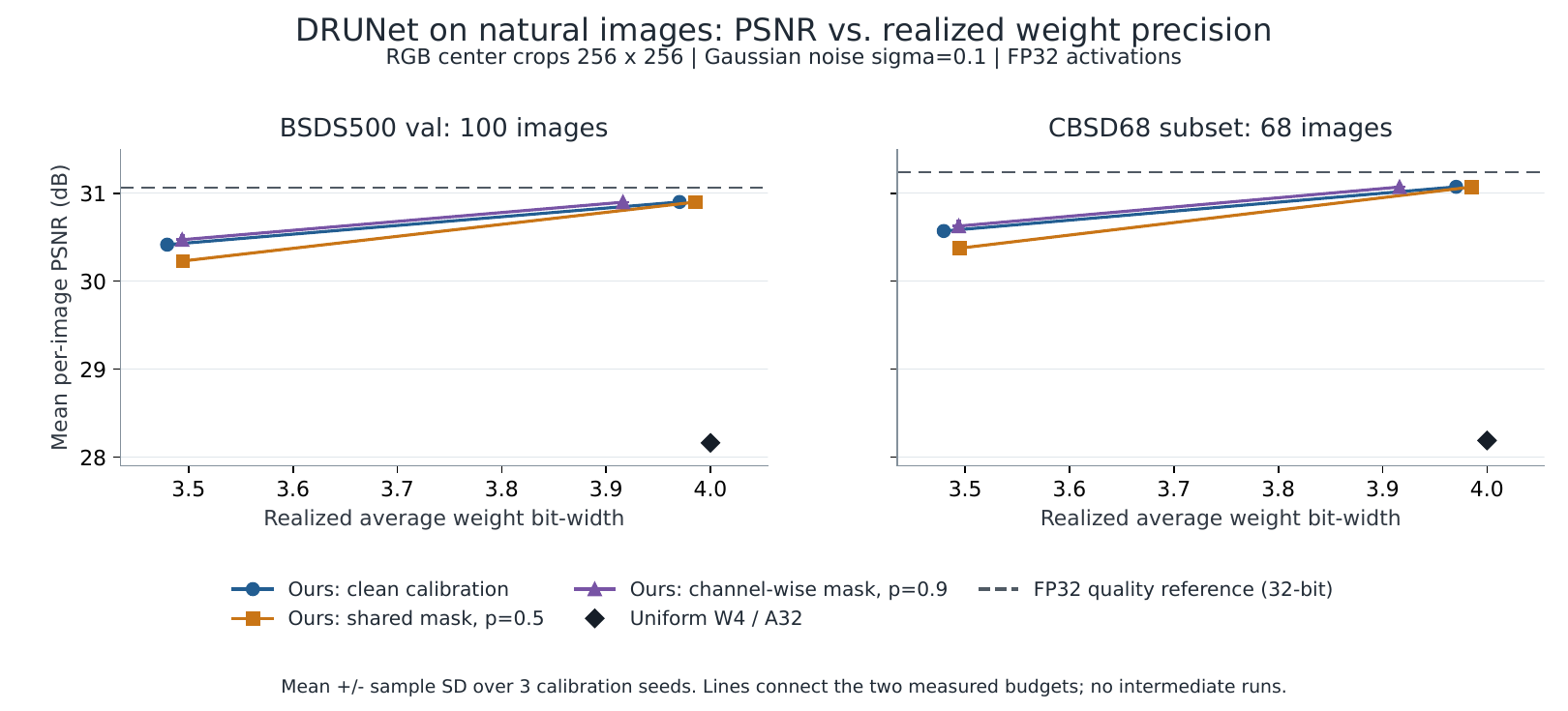}
\caption{DRUNet PSNR on BSDS500 and its CBSD68 subset. Our mixed-precision allocations remain close to FP32 at $B=4$ under all three calibration conditions. Higher PSNR is better. Error bars show the standard deviation over three calibration seeds. Lines connect the two tested budgets.}
\label{fig:bsds_quality}
\end{figure}

\begin{figure}[t]
\color{reviewadded}

\centering
\includegraphics[width=\textwidth]{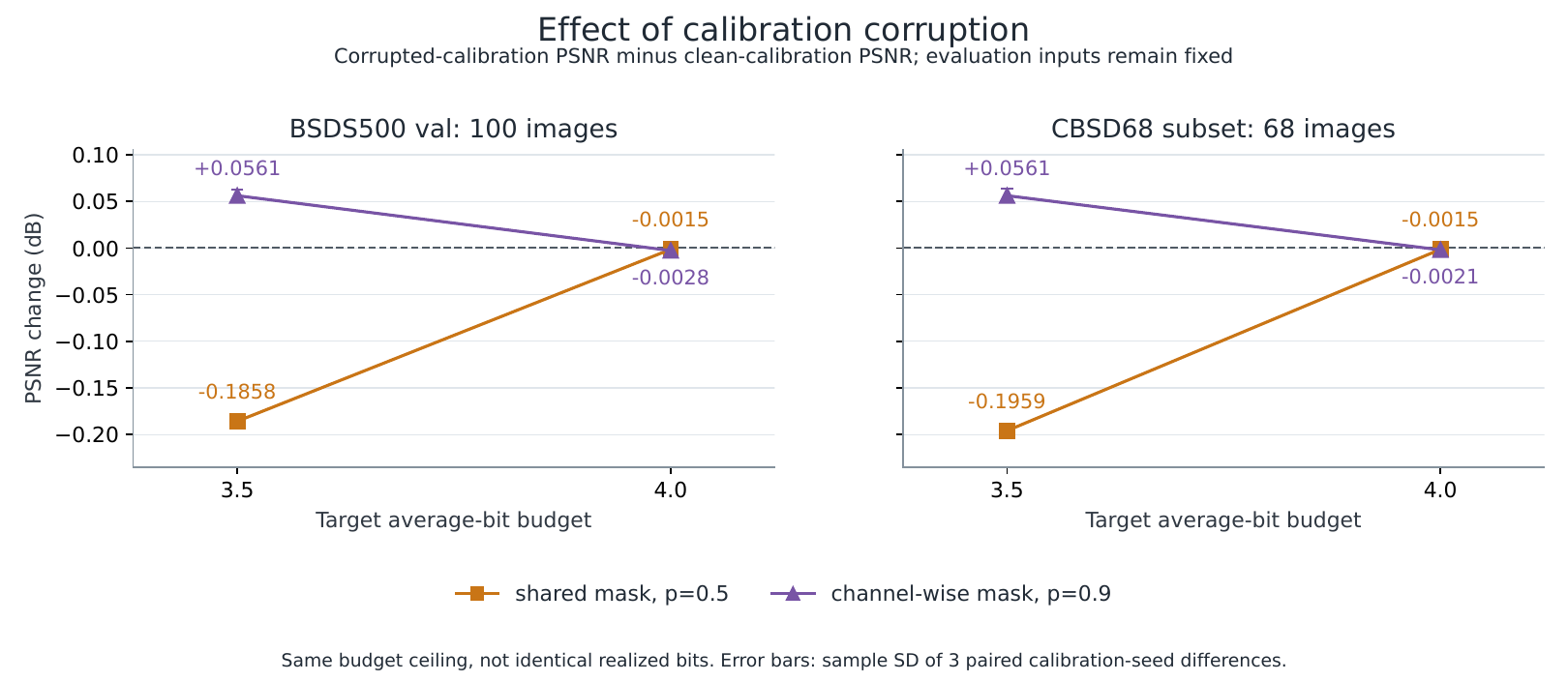}
\caption{PSNR change when calibration inputs are corrupted, with the test set fixed. Values near zero indicate little change in quality. The effect is negligible at $B=4$ and larger at $B=3.5$. Error bars show the standard deviation of the three paired calibration-seed differences.}
\label{fig:bsds_calibration_effect}
\end{figure}

\begin{figure}[t]
\color{reviewadded}

\centering
\includegraphics[page=2,width=\textwidth]{img/bsds_qualitative_boards.pdf}
\caption{Fixed-random BSDS500 examples at $B=4$, using calibration seed 0. The mixed-precision outputs are similar across calibration conditions. PSNR is computed on the full crop before display clipping.}
\label{fig:bsds_examples}
\end{figure}

\begin{figure}[t]
\color{reviewadded}

\centering
\includegraphics[page=4,width=\textwidth]{img/bsds_qualitative_boards.pdf}
\caption{Central zooms of Figure~\ref{fig:bsds_examples}. The same $96\times96$ region is shown for every method. PSNR labels refer to the full crops, not the zooms. Our outputs remain similar under the two calibration corruptions.}
\label{fig:bsds_zooms}
\end{figure}

\clearpage
\subsection{Effect of allocation granularity}
\label{app:allocation_granularity}

The low cost of our allocation method allows us to explore finer choices of bit-width. We replace the $36$ blocks by $64$ independent convolutional layers, keeping W3--W8, the local score and the greedy rule unchanged. Both variants reuse the same FP32 calibration passes. Each experiment uses three calibration sets of $256$ images and a fixed test set of $100$ images.

On CelebA, the layerwise allocation improves PSNR by $0.78$~dB at $B=3.5$ and reaches the quality of CLADO (Table~\ref{tab:granularity_celeba}). The CLADO values are those reported in the main comparison; its $36$ groups and W3/W4/W8 candidates differ from our layerwise configuration. The block--layer comparison uses its own paired block control.

\begin{table}[!htbp]
\color{reviewadded}

\caption{Allocation granularity on CelebA with clean calibration. Values are means over three calibration seeds. CLADO is included as a quality reference with its original groups and candidates.}
\label{tab:granularity_celeba}
\centering
\begin{tabular}{llccc}
\toprule
Method & Decisions & \shortstack[c]{Candidate\\bit-widths} & \shortstack[c]{Mean\\bit-width} & PSNR (dB) \\
\midrule
\multicolumn{5}{c}{$B=3.5$} \\
Our method & 36 blocks & 3--8 & 3.482 & 33.298 \\
Our method & 64 layers & 3--8 & 3.494 & 34.078 \\
CLADO & 36 blocks & 3, 4, 8 & 3.494 & 34.101 \\
\midrule
\multicolumn{5}{c}{$B=4$} \\
Our method & 36 blocks & 3--8 & 3.987 & 34.401 \\
Our method & 64 layers & 3--8 & 3.990 & 34.451 \\
CLADO & 36 blocks & 3, 4, 8 & 3.988 & 34.409 \\
\bottomrule
\end{tabular}
\end{table}

On BSDS500, finer allocation improves PSNR at $B=3.5$ in all three calibration conditions, while both granularities give nearly identical quality at $B=4$ (Table~\ref{tab:granularity_bsds}). These results show the benefit of allowing more allocation choices at tight budgets. They compare two grouping choices: grouping also determines which local perturbations are summed before their mean and dispersion are computed.

\begin{table}[!htbp]
\color{reviewadded}

\caption{Allocation granularity on BSDS500, with W3--W8 for both variants. Each entry gives mean PSNR in dB, followed by mean bit-width in parentheses.}
\label{tab:granularity_bsds}
\centering
\setlength{\tabcolsep}{5pt}
\begin{tabular}{lccc}
\toprule
Decisions & Clean & $p=0.5$ shared & $p=0.9$ per-channel \\
\midrule
\multicolumn{4}{c}{$B=3.5$} \\
36 blocks & 30.415 (3.479) & 30.229 (3.495) & 30.471 (3.494) \\
64 layers & 30.656 (3.498) & 30.581 (3.494) & 30.658 (3.498) \\
\midrule
\multicolumn{4}{c}{$B=4$} \\
36 blocks & 30.901 (3.970) & 30.899 (3.985) & 30.898 (3.916) \\
64 layers & 30.904 (3.975) & 30.903 (3.993) & 30.899 (3.997) \\
\bottomrule
\end{tabular}
\end{table}

\end{reviewaddition}